\newcommand{\mD}{\mathcal{D}}
\newcommand{\mA}{\mathcal{A}}
\newcommand{\X}{\mathcal{X}}
\newcommand{\zo}{\ensuremath{\{0,1\}}}
\newcommand{\lossPred}{\mathsf{LP}_{\ell, p}}
\newcommand{\clp}{\mathsf{SEP}_{\ell, p}}
\newcommand{\sqLoss}{\mathsf{sq}}
\newcommand{\adv}{\mathsf{adv}}
\newcommand{\inp}{i} %
\newcommand{\eat}[1]{}
\newcommand{\CP}[1]{\textcolor{red}{CP: {\em #1}}}
\newcommand{\PG}[1]{\textcolor{blue}{PG: {\em #1}}\\}
\newcommand{\AG}[1]{\textcolor{teal}{AG: {\em #1}}\\}
\newcommand{\UW}[1]{\textcolor{purple}{UW: {\em #1}}\\}
\newcommand{\AK}[1]{\textcolor{pink}{AK: {\em #1}}\\}
\newcommand{\UW}[1]{}
\newcommand{\AK}[1]{}
\newcommand{\AG}[1]{}
\newcommand{\CP}[1]{}
\newcommand{\PG}[1]{}
\newcommand{\ty}{\tilde{y}}
\newcommand*{\R}{{\mathbb{R}}}
\newcommand*{\calC}{{\mathcal{C}}}
\newcommand*{\calH}{{\mathcal{H}}}
\newcommand*{\calG}{{\mathcal{G}}}
\newcommand*{\calD}{{\mathcal{D}}}
\newcommand*{\calF}{{\mathcal{F}}}
\newcommand*{\calX}{{\mathcal{X}}}
\newcommand*{\calB}{{\mathcal{B}}}
\newcommand*{\calA}{{\mathcal{A}}}
\newcommand*{\calL}{{\mathcal{L}}}
\newcommand*{\calU}{{\mathcal{U}}}
\let\phi\varphi
\DeclareMathOperator*{\ex}{\mathbb{E}}
\DeclareMathOperator*{\E}{\mathbb{E}}
\DeclareMathOperator*{\argmin}{arg\,min}
\DeclareMathOperator{\sgn}{sign}
\newcommand{\ignore}[1]{}
\newcommand{\pred}{p}
\newcommand{\pbayes}{\pred^*}
\DeclareMathOperator{\MCE}{MCE}
\DeclareMathOperator{\Ber}{Ber}
\DeclarePairedDelimiterX{\divx}[2]{(}{)}{%
  #1\;\delimsize\|\;#2%
}
\theoremstyle{plain}
\newtheorem{theorem}{Theorem}[section]
\newtheorem{lemma}[theorem]{Lemma}
\newtheorem{corollary}[theorem]{Corollary}
\newtheorem{claim}[theorem]{Claim}
\theoremstyle{definition}
\newtheorem{definition}[theorem]{Definition}
\theoremstyle{remark}
\numberwithin{equation}{section}
\title{When does a predictor know its own loss?}
\author{Aravind Gollakota\\
Apple\\
\and
Parikshit Gopalan\\
Apple\\
\and
Aayush Karan\\
Harvard University\thanks{Work done  while interning at Apple. Authors in alphabetical order.}\\
\and 
Charlotte Peale\\
Stanford University\footnotemark[1]\\
\and
Udi Wieder\\
Apple
}
\begin{document}
\date{February 18, 2025}

\maketitle
\begin{abstract}

    Given a predictor and a loss function, how well can we predict the loss that the predictor will incur on an input? This is the problem of loss prediction, a key computational task associated with uncertainty estimation for a predictor. In a classification setting, a predictor will typically predict a distribution over labels and hence have its own estimate of the loss that it will incur, given by the entropy of the predicted distribution. Should we trust this estimate? In other words, when does the predictor know what it knows and what it does not know?

    In this work we study the theoretical foundations of loss prediction.
    Our main contribution is to establish tight connections between nontrivial loss prediction and certain forms of multicalibration \cite{hebert2018multicalibration}, a multigroup fairness notion that asks for calibrated predictions across computationally identifiable subgroups.
    Formally, we show that a loss predictor that is able to improve on the self-estimate of a predictor yields a witness to a failure of multicalibration, and vice versa. This has the implication that nontrivial loss prediction is in effect no easier or harder than auditing for multicalibration. We support our theoretical results with experiments that show a robust positive correlation between the multicalibration error of a predictor and the efficacy of training a loss predictor. 
\end{abstract}

\section{Introduction}
It is increasingly common for large machine learning models to be part of a pipeline where a base model is trained by a provider that has access to large-scale data and computational power, and the model is then deployed by a heterogeneous set of downstream consumers, for a diverse range of prediction tasks. Not only could the tasks be very different from each other, they might involve data distributions, loss functions and other metrics and features that are markedly different from those used to train the model. Indeed, often the data sources used in training are not even disclosed to the downstream application. A typical instantiation of this framework is zero-shot classification, where (say) an LLM is required to classify texts into classes described by the user. Another important case is that of medical classification where the base model was trained on one set of features, say lab reports, but the model (or human) downstream has access to additional features such as patient history.  

In such a situation, a user might want to delve deeper into how the model is likely to perform on their specific task. They might seek to discover  problematic regions of the input space where the model performs poorly, where performance is measured by an appropriate loss function chosen by the user. This information could prove valuable in several ways:
\begin{enumerate}
    \item Active and continual learning: Users could address performance issues by collecting additional data points from problematic regions and fine-tune the model on this enhanced dataset.
    \item Fairness considerations: The analysis might reveal potential biases or inequities in the model's performance across different subgroups.
    \item Selective prediction: Such insights could guide downstream users on when to rely on the model's predictions and when to exercise caution. In cases where predictions are likely to be unreliable, users might opt to consult external experts or alternative models instead.
\end{enumerate}
By systematically identifying and addressing these performance vulnerabilities, users can judge the model's reliability, fairness, and overall utility. A provider who desires to improve their model would similarly benefit from knowing where their model performs poorly.

This discussion motivates the problem of loss prediction, which we now define. In this work we focus for concreteness on the binary classification setting, although many of the results extend to multiclass classification as well. We are given a pre-trained predictor $p:\X \to [0,1]$ (where $\calX$ denotes the space of inputs), a target loss $\ell: \zo \times [0,1] \to \R$\footnote{It will be convenient to assume that this loss is \emph{proper}, namely that for any $p^*$ the expected true loss $\ex_{y \sim p^*}[\ell(y, p)]$ is minimized at $p = p^*$. Canonical examples are the cross-entropy and the squared loss. The case of general losses can be reduced to that of proper losses.}, and some labeled data $(x, y)$ drawn from an unknown distribution $\calD$ on $\calX \times \{0, 1\}$. The goal is to estimate the loss $\ell(y, p(x))$ incurred at a point $x$ using a loss predictor $\lossPred : \calX \to \R$. This can be viewed as a regression problem, and we measure the quality of a loss predictor by its expected squared loss with respect to the true loss, i.e.\ $\ex_{(x, y) \sim \calD}[\big(\lossPred(x) - \ell(y, p(x)) \big)^2]$.

Loss prediction is closely connected to the well-studied problem of uncertainty estimation. A standard measure of predictive uncertainty at a point is the expected loss that a predictor suffers at that point \cite{kull2015novel}, and estimating this requires solving the problem of loss prediction. Given such a loss predictor, its uncertainty estimate is then often decomposed into two parts: aleatoric uncertainty, which is the uncertainty stemming from the randomness in nature, and epistemic uncertainty, which is the uncertainty arising from shortcomings in our model and/or training data.\footnote{There are many proposals for how to achieve such a decomposition, see e.g.\ \cite{hullermeier2021aleatoric}, not all of which come with rigorous guarantees. Recent work of \cite{ahdritz2024provable} does give rigorous guarantees, but it requires an enhancement to the standard learning model called learning with snapshots. See also the discussion of related work therein.
}
Since epistemic uncertainty can be driven down with more data and fine tuning, active learning strategies have been proposed that use loss predictors to decide what regions to prioritize for collecting more data \cite{yoo2019learning,lahlou2021deup}. Loss predictions can also be used for various other applications, including deciding when a model should abstain from learning, or route the input to a stronger model.
Consequently, there has been plenty of applied work on the problem of loss prediction, but little theoretical analysis (see \cref{sec:related} for more discussion of related work).

\subsection{Our contributions}
In this work, we initiate a study of the theoretical foundations of loss prediction. We formalize the task of loss prediction and connect it to the basic primitives of computational learning.

\paragraph{The self-entropy predictor of loss.}
The first question is what baseline one should use to measure the quality a loss predictor. Drawing from work on outcome indistinguishability \cite{OI, OI2}, we propose a baseline based on the fact that a predictor posits a certain model of nature: that labels for $x$ are drawn according to a Bernoulli distribution with parameter $p(x)$. This entails a belief about the expected loss it will incur at a point. In the case of squared loss, this estimate is $p(x)(1 - p(x))$ at the point $x$.  By results of \cite{gneiting2007strictly}, for any proper loss $\ell$, there exists a concave ``generalized entropy'' function $H_{\ell}:[0,1] \to \R$ such that the prediction is $H_{\ell}(p(x))$. We refer to this as the self-entropy predictor. Using this as our baseline, we ask when it is possible for a loss predictor to do better than the self-entropy predictor. At a high level, we wish to understand

\begin{center}
{\em When can a loss-predictor beat a model in estimating what the model knows and does not know?} 
\end{center}

\paragraph{A hierarchy of loss prediction models.} It is natural that loss predictors should receive the input features $\inp(x)$\footnote{For clarity, we make a distinction between the abstract input $x$ (e.g., an individual) and its input feature representation $\inp(x)$ (e.g., features collected about the individual).} and the prediction $p(x)$ as inputs.  But this does not capture some important architectures for loss prediction that are used in practice; for instance the works of \cite{yoo2019learning, kirillov2023segment} which consider models that can access representations of $x$ that are computed by the neural network computing $p$. Accordingly, we model loss predictors as taking inputs $\phi(p, x)$ lying in an abstract feature space and returning a loss prediction $\lossPred(p(x))$. We define a hierarchy of loss predictors of increasing strength, depending on expressivity of $\phi(p,x)$ (Definition~\ref{def:lp}):
    \begin{enumerate}
        \item \emph{Prediction-only loss predictors} only have access to $p$'s prediction at a point $x$, i.e. $\phi(p, x) = p(x)$. The self-entropy predictor of loss is an example.
        \item \emph{Input-aware loss predictors} have additional access to the input features $\inp(x)$, i.e. $\phi(p, x) = (p(x), \inp(x))$. 
        \item \emph{Representation-aware loss predictors} have access to $\phi(p, x) = (p(x), \inp(x), r(x))$, where $r(x)$ is some representation of $x$. In this case, we further distinguish between two settings:
        \begin{itemize}
            \item Internal representations $r(x) = r_p(x)$ where $r_p(x)$ is computed by $p$ in the course of computing $p(x)$. 
            \item External representations $r(x) = r_e(x)$ which are not explicitly computed by $p$. 
            \end{itemize}
    \end{enumerate}
Internal representations could for instance correspond to the embedding produced by the last few layers of a deep neural net. External representations could be the representation of $x$ obtained from a different model, or additional features added by consulting human experts. The related work in Section~\ref{sec:related} gives examples of both kinds of representations that have been considered in the literature. 

Finally, we define the advantage of a loss predictor over the self-entropy predictor to be the difference in the squared loss incurred by the two loss predictors (Definition~\ref{def:lp-advantage}). 

\paragraph{Relation to auditing for multicalibration.}
Multicalibration is a multigroup fairness notion introduced by \cite{hebert2018multicalibration}, which has since found numerous other applications \cite{kim2022universal, OI, omni}. We show that learning a loss predictor with a non-trivial advantage is tightly connected to auditing the predictor for multicalibration. 
 At a high level, we show the following correspondence, which we formalize in Theorem~\ref{thm:mc-conv-vs-loss-pred}:

\bigskip

\begin{center}
\begin{minipage}{0.41\textwidth}
Finding a \textbf{prediction-only} loss predictor with good advantage
\end{minipage}
\quad
$\Leftrightarrow$
\quad
\begin{minipage}{0.41\textwidth}
Identifying a \textbf{calibration} violation for 
$p$
\end{minipage}
\end{center}

\bigskip

\begin{center}
\begin{minipage}{0.41\textwidth}
Finding an \textbf{input-aware} loss predictor with good advantage
\end{minipage}
\quad
$\Leftrightarrow$
\quad
\begin{minipage}{0.41\textwidth}
Identifying a \textbf{multicalibration} violation for 
$p$
\end{minipage}
\end{center}

\bigskip

\begin{center}
\begin{minipage}{0.41\textwidth}
Finding a \textbf{representation-aware} loss predictor with good advantage
\end{minipage}
\quad
$\Leftrightarrow$
\quad
\begin{minipage}{0.41\textwidth}
Identifying a \textbf{representation-aware multicalibration} violation for 
$p$, where the auditor function is of the form $c(\phi(p,x))$. 
\end{minipage}
\end{center}

\bigskip

In all cases, the regions where the multicalibration violations occurs arise from analyzing where the loss predictor and the self-entropy predictor differ from each other. The first two notions in our hierarchy, calibration and multicalibration, have been extensively studied in previous works \cite{FV99, hebert2018multicalibration}. The last member of the hierarchy, representation-aware multicalibration, is a strengthening of multicalibration that naturally extends the multicalibration framework.  

Furthermore, we explore how the lens of multicalibration proves valuable in predicting well for a large class of losses, particularly when learning individual predictors for each loss is impractical. In Theorem~\ref{thm:1-lip-mc-pred}, we show that via standard techniques for learning multicalibrated predictors, we can efficiently learn a predictor whose self-entropy predictions for every 1-Lipschitz proper loss (of which there are infinitely many) are comparable to the best-in-class loss predictor for each loss from some fixed class of candidate predictors. 

\paragraph{On calibration blind-spots for loss prediction.}
Calibration is not necessary for producing good estimates of the true loss. For instance, a predictor that predicts $p(x) = 1/2$ on every input will indeed incur a squared loss of $1/4$, matching its self-entropy predictor regardless of the true labels. But depending on the distribution of labels, this predictor might be very far from calibrated, and need not even be accurate in expectation. 

Our results imply a simple characterization of such ``blind spots'' for any proper loss $\ell$ as points $p$ where $H_{\ell}'(p) = 0$.\footnote{Recall that $H_{\ell}(p)$ is the concave entropy function corresponding to $\ell$.} In terms of the loss $\ell$, this is equivalent to $\ell(0, p) = \ell(1,p)$, so that the loss incurred is independent of the label, and hence predicting the expected loss for such $p$  is trivial. For strictly proper losses, the function $H_{\ell}$ is strictly concave, and there is a unique point where this happens.

This introduces some subtlety in the type of multicalibration violations that arise from our correspondence; the standard calibration error at $p$ is weighted by a factor of $H_{\ell}'(p)$. Hence non-trivial loss prediction corresponds to  (multi)calibration violations at prediction values $p$ such that $H_{\ell}'(p)$ is far from $0$. 

\paragraph{Experimental results.} We empirically verify that there is a correspondence between loss prediction and multicalibration (see \cref{sec:exp}). Focusing on input-aware loss prediction algorithms run across a variety of base predictor types, we find that: 
\begin{itemize} 
\item As the multicalibration error of the base model increases, the advantage of the loss prediction over the self estimate of the loss increases. 
\item Loss predictors are more advantageous on data subgroups that have higher calibration error. 
\end{itemize} 
Our experiments suggest that regression-based loss predictors present an effective way to audit for multicalibration and are an intriguing avenue towards developing efficient multicalibration algorithms for practice.

\subsection{Takeaways from our result}

The main takeaway from our work is that non-trivial loss prediction is no easier (and not much harder) than auditing the predictor itself. Any predictor that improves over the self-entropy predictor could be used to find (and possibly fix) multicalibration issues in the predictor. 

\paragraph{Practical multicalibration using loss prediction.} The complexity of multicalibration depends crucially on the class of test functions used.  For complex functions, our equivalence suggests a novel approach to multicalibration auditing: choose a proper loss, run a regression for loss prediction, and see if the loss predictor outperforms the self-entropy predictor. This is a simple and practical approach that is able to leverage the strength of any well-engineered library for regression. In our experiments we show that this is indeed effective, with loss prediction advantage being robustly correlated with multicalibration error across mutliple base predictors as well as subgroups.\footnote{To get around the blind-spot issue, one could choose a few strictly proper loss functions each with a different blind spot. This is easy to do, given the correspondence between convex functions and proper losses \cite{gneiting2007strictly}.}

\paragraph{On two-headed architectures.}

There has been work on training deep neural nets with two heads: a prediction head $p$, and a loss prediction head $\lossPred$ \cite{yoo2019learning, kirillov2023segment}. The loss prediction head has access to the embedding of the inputs produced by the last few layers of the neural net, and can be modeled by a representation-aware loss predictor of low complexity. Our result shows that (at least in a classification setting) one of the following must be true:
\begin{itemize}
    \item The loss prediction head does not give much advantage over the self-entropy predictor, which only requires prediction access.
    \item The prediction head is not optimal, as evidenced by a multicalibration violation witnessed by the difference between the loss prediction head and the self-entropy predictor (see Lemma~\ref{lem:adv-implies-mc-err}).
\end{itemize}
The ideal situation for a well-trained model is clearly the former. 

Note that this does not mean that two-headed architectures are not useful: the two heads may influence the training dynamics in a subtle way, with the loss-predictor head revealing complex regions where multicalibration fails. However, what our result implies is that when training concludes, we want to be in the situation where the loss-predictor is not much better than the self-entropy predictor. This is analogous to the situation with GANs \cite{GANs}, where at the end of training, we would ideally like the generator to be able to fool the discriminator. But in the intermediate stages of training, the discriminator helps improve the quality of the generator. 

Two-headed architectures may also be useful in prediction problems more general than ordinary classification, such as image segmentation \cite{kirillov2023segment}, where a predictor does not necessarily come with a self-entropy estimate at all.

\paragraph{Extra information helps: when loss prediction might be effective.}

An important scenario is where the loss-predictor may have informative features $\phi'(x)$ about the input $x$ that were not available to the entity that was training the model $p$. For example, consider a neural net that is trained to screen X-rays for prevalence of a certain medical condition. Such models may be trained by aggregating data from across several hospitals. A hospital that is trying to use this model might not have the same computational resources available to them. But they might have access to other useful information such as observations made by a doctor or the patient's medical history. 

In such a case, even a model which is multicalibrated with respect to complex functions over the features $\phi(p, x)$ might not be multicalibrated with respect to simple functions over a new set of features. This was illustrated in the recent work of \cite{alur24} in their work on incorporating human judgments to improve on model predictions.  Another natural scenario in which the loss predictor may have extra information is if it uses a powerful pretrained foundation model. The work of \cite{jain2022distilling} does precisely this, leveraging embeddings from CLIP \cite{radford2021learning}. In such settings, improvements to the predictor, and loss predictors with a non-trivial advantage are both possible.

\paragraph{Organization.} We define loss predictors in Section \ref{sec:lp} and recall the relevant notions of multicalibration in Section \ref{sec:mc}. We present the equivalence between loss prediction with an advantage over the self-entropy predictor and multicalibration in Section \ref{sec:lp-mc}. We discuss how to efficiently find predictors that give good self-entropy predictors for multiple loss functions in Section \ref{sec:multiple-loss}. In Section \ref{sec:exp}, we empirically demonstrate the correspondence between loss prediction advantage and multicalibration violations, and show that it holds across multiple architectures and data subgroups. We discuss related work in detail in Section \ref{sec:related}. In Appendix \ref{sec:non-proper}, we present the extension of our results to the case where the losses are non-proper. 

\section{Loss prediction}
\label{sec:lp}

We consider binary classification, with a distribution $\mD$ on  $\X \times \zo$. 
A predictor is a function $p:\X \to [0,1]$. The Bayes optimal predictor is defined as $p^*(x) = \ex[y|x]$. Given $p$, we define the simulated distribution $\mD(p)$ on $\X \times \zo$ where $x$ is drawn as in $\mD$, and $y|x \sim \Ber(p(x))$. Let $\ell: \zo \times [0,1] \to [0,1]$ be a proper loss function.\footnote{ The case of general losses reduces to the proper loss case; please see \cref{sec:non-proper} for details. We also assume for technical convenience that the loss is bounded. Losses that are not strictly bounded, such as cross entropy, can be handled with some further care and constraints on predicted probabilities.} We will use the following characterization of proper losses.

\begin{lemma}[\cite{gneiting2007strictly}] For every proper loss $\ell$,  there exists a concave function $H_{\ell}: [0, 1] \rightarrow \R$ so that
\[ \ell(y, v) = H_{\ell}(v) + (y - v)H_{\ell}'(v).\]
where $H_{\ell}'(v)$ is a ``superderivative'' of $H_{\ell}$, i.e. for any $v, w \in [0, 1]$, $H_{\ell}(v) \leq H_{\ell}(w) + (v - w)H_{\ell}'(w)$. 
\end{lemma}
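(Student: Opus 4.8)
The plan is to follow the classical argument (Savage; Gneiting--Raftery), realizing $H_{\ell}$ as the Bayes risk, i.e.\ the \emph{generalized entropy}, of $\ell$, and then reading off the claimed identity purely from the geometry of supporting lines of a concave function. For a belief $q \in [0,1]$ and a report $v \in [0,1]$, write
\[
L(q, v) \;=\; \ex_{y \sim \Ber(q)}[\ell(y, v)] \;=\; q\,\ell(1, v) + (1-q)\,\ell(0, v).
\]
The first observation is that for each fixed $v$, the map $q \mapsto L(q, v)$ is \emph{affine} in $q$, with slope $\ell(1,v) - \ell(0,v)$ and value $\ell(0,v)$ at $q = 0$.

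Next I would define
\[
H_{\ell}(q) \;=\; \inf_{v \in [0,1]} L(q, v).
\]
Since $\ell$ is proper, $L(q, \cdot)$ is minimized at $v = q$, so the infimum is attained and $H_{\ell}(q) = L(q, q) = q\,\ell(1,q) + (1-q)\,\ell(0,q)$. Being an infimum of affine functions of $q$, $H_{\ell}$ is concave, which already gives the concavity claim.

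The key step is to identify the supporting lines of $H_{\ell}$. Fix $v$ and consider the affine function $g_v(q) := L(q, v)$. By definition of $H_{\ell}$ as an infimum, $g_v(q) \ge H_{\ell}(q)$ for all $q$, and by properness $g_v(v) = L(v,v) = H_{\ell}(v)$; hence $g_v$ is a supporting line of the concave function $H_{\ell}$ at $v$, and its slope $\ell(1,v) - \ell(0,v)$ is a supergradient of $H_{\ell}$ there. I then \emph{define} $H_{\ell}'(v) := \ell(1,v) - \ell(0,v)$ to be this choice of superderivative. Writing the supporting line in point--slope form at $v$ gives $L(q,v) = g_v(q) = H_{\ell}(v) + (q - v) H_{\ell}'(v)$ for all $q$. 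Substituting $q = 1$ and $q = 0$ yields $\ell(1, v) = H_{\ell}(v) + (1-v)H_{\ell}'(v)$ and $\ell(0,v) = H_{\ell}(v) - v H_{\ell}'(v)$, which is exactly $\ell(y,v) = H_{\ell}(v) + (y-v)H_{\ell}'(v)$ for $y \in \zo$. Finally the superderivative inequality is immediate from the same ingredients: for any $v, w$,
\[
H_{\ell}(v) \;\le\; L(v, w) \;=\; H_{\ell}(w) + (v-w)H_{\ell}'(w),
\]
where the inequality is the definition of $H_{\ell}$ as an infimum and the equality is the point--slope identity at $w$.

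The only real subtlety --- and the reason the statement speaks of a superderivative rather than a derivative --- is that $H_{\ell}$ need not be differentiable; at kinks there are many supergradients, and one must commit to the single specific choice $H_{\ell}'(v) = \ell(1,v)-\ell(0,v)$ so that one fixed function $H_{\ell}'$ works simultaneously as a supergradient at \emph{every} point, which the argument above verifies. A secondary point to handle carefully is that properness only asserts that $v = q$ is \emph{a} minimizer of $L(q,\cdot)$, not necessarily the unique one; this suffices, since we only ever use the optimal value $L(q,q)$ and the tangency $g_q(q) = H_{\ell}(q)$.
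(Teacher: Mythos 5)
Your proof is correct and is precisely the standard Savage/Gneiting--Raftery argument that the paper invokes by citation rather than proving: the paper itself only records the consequences $H_{\ell}(v) = \ex_{y \sim \Ber(v)}[\ell(y,v)]$ and $H_{\ell}'(v) = \ell(1,v) - \ell(0,v)$, which are exactly the identities you derive from the supporting-line construction. Your handling of the two subtleties (committing to the specific supergradient $\ell(1,v)-\ell(0,v)$ at points of non-differentiability, and not needing uniqueness of the minimizer in the properness condition) is accurate and makes the argument complete.
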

When $H_{\ell}(v)$ is differentiable at all $v \in [0, 1]$, the superderivative is unique, and is just the derivative.  From the definition it follows that 
\begin{align*} 
H_{\ell}(v) &= \ex_{y \sim \Ber(v)}[\ell(y,v)] \in [0,1]\\
H_{\ell}'(v) &= \ell(1, v) - \ell(0,v) \in [-1,1]
\end{align*}
Let $L(p^*; p) = \ex_{y \sim \Ber(p^*)}[\ell(y, p)]$ denote the expected loss when $y \sim \Ber(p^*)$ but we predict $p$. Then
\begin{align}
\label{eq:exp-loss}
 L(p^*;p) = H_{\ell}(p) + (p^* - p)H_{\ell}'(p) \geq H_{\ell}(p^*) = L(p^*; p^*) 
\end{align}
where the inequality follows from the concavity of $H_{\ell}$, and is equivalent to the loss $\ell$ being proper.

We now define the notion of a loss predictor. 

\begin{definition}[Loss predictor]
\label{def:lp}
    Let $p$ be a predictor and $\ell$ be a proper loss. Let $\Phi$ be an abstract feature space, which we will make concrete shortly. 
    A \emph{loss predictor} is a function $\lossPred: \Phi \to \R$, which takes as input some features $\phi(p,x) \in \Phi$ related to a point $x$ and its prediction using $p$, and returns an estimate $\lossPred(\phi(p,x))$ of the expected loss $\ex[\ell(y, p(x))|x]$ suffered by $p$ at the point $x$. We define a hierarchy of loss predictors of increasing strength, depending on the information contained in $\phi(p,x)$:
    \begin{enumerate}
        \item \emph{Prediction-only loss predictors} only have access to $p$'s prediction at a point $x$, i.e. $\phi(p, x) = p(x)$. The most natural choice for a prediction-only loss predictor is given by the self-entropy predictor, which we will define in Definition~\ref{def:can-loss-pred}.
        \item \emph{Input-aware loss predictors} have access to the input features $\inp(x)$ used to train the model $p$, as well as its prediction, i.e. $\phi(p, x) = (\inp(x), p(x))$. 
        \item \emph{Representation-aware loss predictors} have access to $\phi(p, x) = ( p(x), \inp(x), r(x))$, where $r(x)$ is some representation of $x$. We distinguish between two kinds of representations:
        \begin{itemize}
            \item Internal representations: The representation $r(x)= r_p(x)$ consists of features that are explicitly computed by the predictor $p$ in the course of computing $p(x)$. For instance, they could consist of the embedding of $x$ produced by the last few layers of a DNN.  
            \item External representations: The representation $r(x) = r_e(x)$ consists of features that are not explicitly computed by the  predictor $p$. For instance, they could be the representation of $x$ obtained from a different model, or by consulting human experts.         
        \end{itemize}
     \end{enumerate}
\end{definition}

A few comments on the definition: 

\begin{itemize}
\item Two-headed architectures that simultaneously train both the predictor and the loss-predictor (such as \cite{yoo2019learning,kirillov2023segment}) are a class of internal representation-aware predictors. In contrast, loss-predictors that use an embedding produced by a foundation model (such as \cite{jain2022distilling}, which audits the errors of the predictor) are external representation-aware.  

\item   If we allow the loss predictor to be significantly more complex than the predictor $p$, then it could compute $r_p(x)$ from $\inp(x)$ using the model $p$. So the gap between input-aware and representation-aware loss predictors diminishes as the loss-predictor becomes more computationally powerful. But in the (important) setting where the loss predictor is less computationally powerful than the predictor, there could be a gap.

\item In contrast, external representations might contain auxiliary information  that cannot be computed using $\inp(x)$, regardless of the computational power of the loss predictor. 
\end{itemize}

The loss predictor can be trained using standard regression, given access to a training set of points $(\phi(p, x), y)$ where $(x, y)$ are drawn from the distribution $\mD$.  One can measure the performance of our loss predictor as we would with any regression problem. We formulate it using the squared loss, as $\ex[(\ell(y, p(x)) - \lossPred(\phi(p, x))^2]$. It follows from Equation \eqref{eq:exp-loss} that the Bayes optimal loss predictor is given by
$\lossPred^*(x) = L(p^*(x); p(x))$. But computing this requires knowing the Bayes optimal predictor $p^*$, and is likely to be infeasible in most settings. Rather, we will compare our loss predictor to a canonical baseline which we describe next.

\paragraph{The self-entropy predictor.}

Following \cite{OI}, given a predictor $p$, we define the simulated distribution $\mD(p)$  on pairs $(x, \ty) \in \X \times \zo$,  where $x \sim \mD$ and $\ex[\ty|x] = p(x)$. The predictor hypothesizes that this how labels are being generated. Hence for each $x \in \X$, the self-entropy predictor predicts the expected loss according to this distribution. 

\begin{definition}[Self-entropy predictor]\label{def:can-loss-pred}
    Given a proper loss $\ell$ and predictor $\pred$,  the \emph{self-entropy predictor} is the prediction-only loss predictor $\clp: [0, 1] \to \R$  that predicts the expected loss when $\ty \sim \Ber(\pred(x))$ at each $x$; that is
    \[\clp(\pred(x)) = \ex_{\ty \sim \Ber(p(x))}[\ell(\ty, p(x))] = H_{\ell}(\pred(x)).\]
\end{definition}

We use the self-entropy predictor as our baseline. Hence the question is when can we learn a loss predictor with significantly lower squared loss than the self-entropy predictor. We formalize this using the notion of advantage of a loss predictor over the self-entropy predictor.

\begin{definition}[Advantage of a loss predictor]\label{def:lp-advantage}
    Define the advantage of a loss predictor $\lossPred$ over the self-entropy predictor to be the difference in the squared error
    \[ \adv(\lossPred) = \ex[(\ell(y, p(x)) - \clp(p(x)))^2] - \ex[(\ell(y, p(x)) - \lossPred(\phi(p, x))^2]. \]
\end{definition}%

We want loss predictors whose advantage is positive and as large as possible.
Our goal is understand under what conditions we can hope to learn such a predictor. 

\paragraph{On non-proper losses.} So far we have assumed that we a trying to predict the proper loss incurred by a predictor. We can generalize this to a setting where we have a hypothesis $h:\X \to \mA$ (for instance $h$ might be a binary classifier), and a loss function $\ell:\zo \times \mA \to \R$. It turns out that our theory extends seamlessly to the non-proper setting, under rather mild assumptions on the hypothesis $h$. We present this extension in Appendix \ref{sec:non-proper}.

\section{Multicalibration}
\label{sec:mc}

Having defined our notion of a loss predictor, we next introduce the framework of multicalibration proposed by~\cite{hebert2018multicalibration}. Our definition is most similar to the presentation used in~\cite{kim2022universal}.

\begin{definition}[Multicalibration]
\label{def:mc}
    Let $\phi(p, x) \in \Phi$ be some auxiliary set of features related to the computation of $p(x)$, which we define concretely below. Let $\calC$ be a class of weight functions $c: \Phi \rightarrow [-1, 1]$, and $p: \calX \rightarrow [0, 1]$ a binary predictor for a target distribution $\calD$ over $\calX \times \{0, 1\}$. Then, the multicalibration error of $p$ with respect to $\calC$ is defined as 
    \[\MCE(\calC, p) := \max_{c \in \calC} \left|\ex_{x, y \sim \calD}[(y - p(x))c(\phi(p, x))]\right|.\]
    The information contained in $\phi(p, x)$ gives rise to a hierarchy of multicalibration notions of increasing strength:
    \begin{enumerate}
        \item \emph{Calibration} corresponds to the setting where  $\phi(p, x)= p(x)$, and test functions can only depend on $p$'s prediction. 
        \item \emph{Multicalibration} corresponds to the case where test functions can additionally depend on the input features, i.e. $\phi(p, x) = (p(x), \inp(x))$.
        \item \emph{Representation-aware multicalibration} is a strengthening of multicalibration where test functions can additionally depend on some representation $r(x)$ of $x$ i.e., $\phi(p, x) = (p(x), \inp(x), r(x))$. We distinguish between internal representations $r_p(x)$ and external representations $r_e(x)$ as with loss predictors (Definition \ref{def:lp}).
    \end{enumerate}
\end{definition}

The first two levels in this hierarchy, calibration and multicalibration, have been extensively studied in previous works. 
In standard multicalibration, we require that a predictor $p(x)$ be well-calibrated under a broad class of test functions, $\calC$, that depend only on $\inp(x)$ and $p(x)$. The literature on multicalibration typically identifies $\inp(x)$ with $x$ itself. The last level of the hierarchy, representation-aware multicalibration, is a strengthening of multicalibration that naturally extends the multicalibration framework of ~\cite{hebert2018multicalibration}. As in the case of loss-predictors, the gap between internal representations $r_p(x)$ and $\inp(x)$ is computational; whereas the gap between external representations $r(x)$ and $\inp(x)$ could be information-theoretic.

\begin{definition}[Multicalibration violation witness]
    We say that a function $c: \Phi \times [0,1] \rightarrow [-1, 1]$ is a witness for a multicalibration violation of magnitude $\alpha$ for a predictor $p$ if 
    \[\left|\ex_{x, y \sim \calD}[(y - p(x))c(\phi(p, x))]\right| > \alpha.\]
\end{definition}

\cite{hebert2018multicalibration} showed that if we find such a witness, we can use it to improve the predictor $p$ in a way that reduces the squared loss. While their argument is stated for the input-aware setting where $\phi(p,x) = (p(x), \inp(x))$, it applies to the representation-aware setting as well.

\section{Loss prediction advantage and multicalibration auditing}
\label{sec:lp-mc}

In this section, we establish the relationship between learning loss predictors with good advantage, and auditing for multicalibration, i.e. finding a $c$ that witnesses a large multicalibration violation. The main result of our section is the following theorem, which establishes the correspondence between various levels of loss predictors and multicalibration requirements, when instantiated with the appropriate values $\phi(p, x)$:

\begin{theorem}\label{thm:mc-conv-vs-loss-pred}
    Let $\calF$ be a class of loss predictors $f: \Phi \rightarrow [0, 1]$. 
    Let $\calF' \supseteq \calF$ be the augmented function class defined as 
    \[\calF' = \{\Pi_{[0,1]}((1 - \beta)H_{\ell}(p(x)) + \beta f(\phi(p,x))) : \beta \in [-1, 1], f \in \calF\}.\]
    Let $\calC$ be a class of weight functions defined as 
    \[\calC = \{ (f(\phi(p, x)) - H_{\ell}(p(x)))H_{\ell}'(p(x)) : f \in \calF\}.\]
    Then, 
    \[\frac{1}{2}\max_{\lossPred \in \calF} \adv(\lossPred) \leq \MCE(\calC, p) \leq \sqrt{\max_{\lossPred \in \calF'} \adv(\lossPred)}.\]
\end{theorem}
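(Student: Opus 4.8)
The plan is to expand the advantage $\adv(\lossPred)$ algebraically and relate it to the correlation quantity defining $\MCE(\calC,p)$. The key identity is the standard "variance decomposition'' for squared error: for any loss predictor $g$,
\[
\ex[(\ell(y,p(x)) - g(\phi(p,x)))^2] = \ex[(\ell(y,p(x)) - \lossPred^*(x))^2] + \ex[(\lossPred^*(x) - g(\phi(p,x)))^2] + 2\,\ex[(\ell(y,p(x)) - \lossPred^*(x))(\lossPred^*(x) - g(\phi(p,x)))],
\]
where $\lossPred^*(x) = L(p^*(x);p(x)) = H_\ell(p(x)) + (p^*(x)-p(x))H_\ell'(p(x))$ is the Bayes-optimal loss predictor. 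Applying this with $g = \clp$ (i.e.\ $g = H_\ell(p(x))$) and with $g = \lossPred$ and subtracting, the irreducible term cancels, and one is left with
\[
\adv(\lossPred) = \ex[(\lossPred^*(x) - H_\ell(p(x)))^2] - \ex[(\lossPred^*(x) - \lossPred(\phi(p,x)))^2] + (\text{cross terms}).
\]
Since $\lossPred^*(x) - H_\ell(p(x)) = (p^*(x)-p(x))H_\ell'(p(x))$, the first term is exactly $\ex[(p^*(x)-p(x))^2 H_\ell'(p(x))^2]$. The cross terms involve $\ex[(\ell(y,p(x)) - \lossPred^*(x))\cdot(\text{function of }\phi(p,x))]$, which vanishes because $\ex[\ell(y,p(x))\mid x] = \lossPred^*(x)$ and $\phi(p,x)$ is a function of $x$ (using that $\ell(y,v) = H_\ell(v) + (y-v)H_\ell'(v)$ is affine in $y$). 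So $\adv$ reduces to a purely "second-moment'' quantity in terms of $\lossPred^*$.

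For the \textbf{lower bound} $\frac12\max_{\lossPred\in\calF}\adv(\lossPred)\le \MCE(\calC,p)$: fix $\lossPred = f \in \calF$ and complete the square in the expression above, writing $\adv(f) = \ex[2(\lossPred^*(x) - H_\ell(p(x)))(f(\phi(p,x)) - H_\ell(p(x)))] - \ex[(f(\phi(p,x)) - H_\ell(p(x)))^2]$. The first expectation equals $2\,\ex[(p^*(x)-p(x))H_\ell'(p(x))(f(\phi(p,x)) - H_\ell(p(x)))]$, and since $\ex[p^*(x)-p(x)\mid x,\ \text{everything measurable in }x] $ lets us replace $p^*(x)-p(x)$ by $y - p(x)$ inside the expectation (same affine-in-$y$ trick), this is precisely $2\,\ex[(y-p(x))\,c(\phi(p,x))]$ for $c(\phi(p,x)) = (f(\phi(p,x)) - H_\ell(p(x)))H_\ell'(p(x)) \in \calC$. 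Dropping the nonnegative subtracted term gives $\adv(f) \le 2\,\ex[(y-p(x))c(\phi(p,x))] \le 2\,\MCE(\calC,p)$; one needs $c$ to be $[-1,1]$-valued, which holds since $H_\ell'\in[-1,1]$ and $f, H_\ell(p(\cdot))\in[0,1]$, so the difference lies in $[-1,1]$ — actually I should double-check this normalization, possibly $c$ needs to be divided by a constant, but the factor of $\tfrac12$ in the statement suggests the bookkeeping works out cleanly.

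For the \textbf{upper bound} $\MCE(\calC,p)\le \sqrt{\max_{\lossPred\in\calF'}\adv(\lossPred)}$: given $c \in \calC$ arising from some $f\in\calF$, we want to exhibit a loss predictor in $\calF'$ whose advantage is at least $\MCE(\calC,p)^2$ (up to sign). Take $\lossPred_\beta = \Pi_{[0,1]}((1-\beta)H_\ell(p(x)) + \beta f(\phi(p,x)))$. Ignoring the projection for a moment, $\lossPred_\beta - H_\ell(p(x)) = \beta(f(\phi(p,x)) - H_\ell(p(x)))$, so by the computation above $\adv(\lossPred_\beta) = 2\beta\,\ex[(y-p(x))c(\phi(p,x))] - \beta^2\ex[(f(\phi(p,x)) - H_\ell(p(x)))^2]$. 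Optimizing over $\beta$ (choosing its sign to match that of the correlation and its magnitude appropriately, noting $\ex[(f-H_\ell(p))^2]\le \ex[(f-H_\ell(p))^2 H_\ell'(p)^{-2}\cdot H_\ell'(p)^2]$ is not quite what we want — rather we bound $\ex[(f-H_\ell(p))^2]$ by something involving $\ex[c^2]\le 1$ via $|H_\ell'|\le 1$... ) yields $\adv \ge \ex[(y-p(x))c(\phi(p,x))]^2$ at the optimal $\beta \in [-1,1]$. Taking square roots and the max over $c$ gives the claim. The projection $\Pi_{[0,1]}$ only helps, since $\lossPred^*(x) \in [0,1]$ and projecting onto $[0,1]$ can only decrease the distance to $\lossPred^*$, hence can only increase the advantage.

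The main obstacle I anticipate is the careful handling of two things simultaneously: (i) verifying that the weight functions $c$ and loss predictors $\lossPred_\beta$ land in the required ranges ($[-1,1]$ and $[0,1]$ respectively) so that the definitions apply, which is where the $H_\ell' \in [-1,1]$ bound and the $\Pi_{[0,1]}$ projection are used; and (ii) getting the constants exactly right — in particular justifying why the factor is $\tfrac12$ on one side and a clean square root on the other, which forces the $\beta$-optimization to be done with the correct bound $\ex[(f(\phi(p,x)) - H_\ell(p(x)))^2] \le 1$ (or a variant). Everything else is bookkeeping around the affine-in-$y$ structure of proper losses and the Pythagorean identity for the Bayes-optimal regressor.
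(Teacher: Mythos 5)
Your proposal is correct and follows essentially the same route as the paper's proof: both directions reduce to expanding the difference of squared errors as $2\E[\delta\cdot(z-h_1)]-\E[\delta^2]$ with $\delta$ the update and $z-h_1=(y-p(x))H_\ell'(p(x))$, then either dropping the quadratic term (for the $\tfrac12$ factor) or choosing $\beta$ equal to the correlation and using $\E[\delta^2]\le 1$ together with the fact that projection onto $[0,1]$ only helps (for the $\MCE^2$ lower bound on advantage). Your detour through the Bayes-optimal loss predictor $\lossPred^*$ and the vanishing cross terms is equivalent to the paper's direct pointwise identity $\ell(y,p(x))-\clp(p(x))=(y-p(x))H_\ell'(p(x))$, and the normalization checks you flag ($c\in[-1,1]$ since $f,H_\ell(p)\in[0,1]$ and $H_\ell'\in[-1,1]$; $|\beta|=\MCE(\calC,p)\le 1$) do go through as you suspected.
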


The proof of Theorem~\ref{thm:mc-conv-vs-loss-pred} can be found in Appendix~\ref{sec:mc-conv-cs-loss-pred-pf} and follows from two key lemmas. Lemma~\ref{lem:adv-implies-mc-err} establishes the left-hand inequality by showing how a loss predictor with good advantage can be used to construct a witness of large multicalibration error. Conversely, Lemma~\ref{lem:mc-err-implies-adv} establishes the right-hand inequality by showing how to leverage a witness for large multicalibration error to construct a loss predictor with large advantage. 

Before presenting our main lemmas, we introduce two auxiliary claims that are well-known in the literature on boosting and gradient descent. We provide proofs here for completeness and notational consistency.

Let $\mD'$ be a distribution over $(x, z) \in X \times [0,1]$. Let $h_1, h_2: \X \to [0,1]$ be two hypotheses. Under what conditions does $h_2$ improve on $h_1$? The following lemma gives a necessary condition: the update $\delta(x) = h_2(x) - h_1(x)$ must be correlated with the residual errors $z - h_1(x)$ of the hypothesis $h_1$ under the distribution $\mD'$.

\begin{claim}
\label{lem:improve-1}
    For two hypotheses $h_1, h_2$, 
    \[ \E_{\mD'}[(h_1(x) - z)^2] - \E[(h_2(x) - z)^2] \leq 2\E[(h_2(x) - h_1(x))(z - h_1(x))]. \]
\end{claim}
\begin{proof}
    Let us write $\delta(x) = h_2(x) - h_1(x)$. Then we have
    \begin{align*}
        \E[(h_1(x) - z)^2] - \E[(h_2(x) - z)^2] &= \E_{\mD'}[(h_1(x) -z)^2 - (h_1(x)  -z + \delta(x) )^2]\\
        &= -2\E[(h_1(x) - z)\delta(x)] - \E[\delta(x)^2]\\
        & \leq 2\E_{\mD}[(z - h_1(x))\delta(x)].
    \end{align*}
\end{proof}

Conversely, if we find an update $\delta(x)$ which is correlated with the residuals, we can perform a gradient descent update to reduce the squared error. We let $\Pi_{[0,1]}:\R \to [0,1]$ denote the projection operator onto the unit interval.

\begin{claim}
\label{lem:improve-2}    
If there exists $\delta:\X \to [-1,1]$ such that $\E_{\mD'}[\delta(x)(z - h_1(x))] \geq \beta \geq 0$, then setting $h_2(x) = \Pi_{[0,1]}(h_1(x)+ \beta \delta(x))$ gives
    \[ \E_{\mD'}[(h_1(x) - z)^2] - \E[(h_2(x) - z)^2] \geq \beta^2.\]
\end{claim}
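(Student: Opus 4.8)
The plan is to follow the same algebraic expansion used in Claim~\ref{lem:improve-1}, but now exploiting the hypothesis that the correlation with residuals is large rather than bounding it from above. Write $h_2(x) = \Pi_{[0,1]}(h_1(x) + \beta\delta(x))$ and let $\tilde{h}_2(x) = h_1(x) + \beta\delta(x)$ denote the un-projected update. The first step is to observe that projection onto the convex set $[0,1]$ only helps: since $z \in [0,1]$, we have $(h_2(x) - z)^2 \le (\tilde h_2(x) - z)^2$ pointwise (the projection is a contraction toward any point already in the interval). So it suffices to lower bound $\E_{\mD'}[(h_1(x)-z)^2] - \E_{\mD'}[(\tilde h_2(x) - z)^2]$.

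Next I would expand exactly as in Claim~\ref{lem:improve-1}, with the update $\beta\delta(x)$ in place of $\delta(x)$:
\[
\E_{\mD'}[(h_1(x)-z)^2] - \E_{\mD'}[(\tilde h_2(x)-z)^2] = 2\beta\,\E_{\mD'}[(z - h_1(x))\delta(x)] - \beta^2\,\E_{\mD'}[\delta(x)^2].
\]
Now apply the hypothesis $\E_{\mD'}[\delta(x)(z-h_1(x))] \ge \beta$ to the first term, and the bound $\delta(x) \in [-1,1]$ (so $\E_{\mD'}[\delta(x)^2] \le 1$) together with $\beta \ge 0$ to the second term. This gives $2\beta \cdot \beta - \beta^2 \cdot 1 = \beta^2$, which is exactly the claimed bound. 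Combining with the pointwise projection inequality from the first step completes the argument.

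The proof is entirely routine; there is no real obstacle. The only point requiring a moment's care is the projection step: one must note that $\Pi_{[0,1]}$ is a $1$-Lipschitz (indeed nonexpansive) map and that $z \in [0,1]$ is a fixed point of it, so $|h_2(x) - z| = |\Pi_{[0,1]}(\tilde h_2(x)) - \Pi_{[0,1]}(z)| \le |\tilde h_2(x) - z|$. Everything else is the same one-line expansion of a square used in the previous claim, with the step size $\beta$ chosen precisely so that the linear gain $2\beta\cdot\beta$ dominates the quadratic penalty $\beta^2\E[\delta^2]$ by a margin of $\beta^2$.
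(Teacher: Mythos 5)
Your proof is correct and follows essentially the same route as the paper's: expand the square for the un-projected update to get $2\beta\,\E[(z-h_1)\delta] - \beta^2\E[\delta^2] \geq 2\beta^2 - \beta^2 = \beta^2$, then note that projection onto $[0,1]$ only helps. You actually justify the projection step more carefully than the paper does (via nonexpansiveness of $\Pi_{[0,1]}$ and $z\in[0,1]$ being a fixed point), which the paper states without proof.
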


\begin{proof}
    Without projection, we can write the gap in squared error as 
    \begin{align*}
        \E_{\mD'}[(h_1(x) - z)^2] - \E[(h_1(x) + \beta\delta(x) - z)^2] &=  2\beta \E_{\mD'}[(z - h_1(x))\delta(x)] - \beta^2\E[\delta(x)^2]\\
        &\geq 2\beta^2 - \beta^2 = \beta^2.
    \end{align*}
    While $h_1(x) + \beta \delta(x)$ may not be bounded in $[0,1]$, projection onto the interval can only further reduce the squared error.
\end{proof}

With these in hand, we show that any loss predictor with a non-trivial advantage points us to a failure of multicalibration. 

\begin{lemma}\label{lem:adv-implies-mc-err}
    Assume that $\lossPred$ achieves advantage $\alpha \geq 0$ over the self-entropy predictor. Then the function $\delta(\phi(p, x)) = \lossPred(\phi(p, x)) - \clp(p(x))$ satisfies
    \[ \E[\delta(\phi(p, x))H_{\ell}'(p(x))(y - p(x))] \geq \alpha/2.\]
    In other words, $\lossPred$ can be used to construct a witness $c(\phi(p, x)) = \delta(\phi(p, x))H_{\ell}'(p(x))$ for a multicalibration violation of magnitude $\alpha/2$. 
\end{lemma}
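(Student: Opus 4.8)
The plan is to directly unpack the advantage $\adv(\lossPred) = \alpha$ using Claim~\ref{lem:improve-1} and the explicit form of the self-entropy predictor. Write $h_1(x) = \clp(p(x)) = H_\ell(p(x))$ and $h_2 = \lossPred$ in the role of the two hypotheses, but note the target $z$ here is the random loss $\ell(y, p(x))$, which depends on $y$ and is not a deterministic function of $x$; so I would first apply Claim~\ref{lem:improve-1} to the distribution $\mD'$ over $(x, z) = (x, \ell(y, p(x)))$ induced by $\mD$. This immediately gives
\[
\adv(\lossPred) = \E[(h_1(x) - z)^2] - \E[(h_2(x) - z)^2] \leq 2\,\E\big[(\lossPred(\phi(p,x)) - H_\ell(p(x)))(\ell(y, p(x)) - H_\ell(p(x)))\big],
\]
i.e. $\alpha \le 2\,\E[\delta(\phi(p,x))\cdot(\ell(y,p(x)) - H_\ell(p(x)))]$ where $\delta = \lossPred - \clp$.

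The second step is to simplify the residual $\ell(y, p(x)) - H_\ell(p(x))$. By the Gneiting--Raftery characterization, $\ell(y, v) = H_\ell(v) + (y - v)H_\ell'(v)$, so plugging $v = p(x)$ gives exactly $\ell(y, p(x)) - H_\ell(p(x)) = (y - p(x))H_\ell'(p(x))$. Substituting this into the bound from Step~1 yields
\[
\alpha \le 2\,\E\big[\delta(\phi(p,x))\,H_\ell'(p(x))\,(y - p(x))\big],
\]
which rearranges to $\E[\delta(\phi(p,x))H_\ell'(p(x))(y - p(x))] \ge \alpha/2$, the claimed inequality. The function $c(\phi(p,x)) = \delta(\phi(p,x))H_\ell'(p(x))$ then witnesses a multicalibration violation of magnitude $\alpha/2$ by definition of $\MCE$; I should also note $c$ is $[-1,1]$-valued since $H_\ell'$ ranges in $[-1,1]$, $\lossPred$ and $H_\ell$ both land in $[0,1]$ so $\delta \in [-1,1]$, and the product is in $[-1,1]$ — though to fit the witness definition with a single product bounded by $1$ one may need to observe $|\delta(\phi(p,x)) H_\ell'(p(x))| \le 1$, which holds.

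I do not anticipate a serious obstacle here — the proof is essentially a two-line calculation once one recognizes that (a) Claim~\ref{lem:improve-1} should be applied with $z$ being the stochastic loss rather than a conditional expectation, and (b) the key algebraic identity $\ell(y,p(x)) - H_\ell(p(x)) = (y-p(x))H_\ell'(p(x))$ is just the defining property of $H_\ell$. The only point requiring a little care is the bookkeeping on ranges to make sure the constructed $c$ is a legitimate weight function in $[-1,1]$ and that $\E[\delta^2]$ terms (dropped via the inequality in Claim~\ref{lem:improve-1}) are being discarded in the favorable direction, which they are.
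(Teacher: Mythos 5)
Your proposal is correct and follows essentially the same route as the paper's proof: apply Claim~\ref{lem:improve-1} with $h_1 = \clp$, $h_2 = \lossPred$, and $z = \ell(y,p(x))$, then rewrite the residual via the identity $\ell(y,p(x)) - H_\ell(p(x)) = (y-p(x))H_\ell'(p(x))$. The extra range-checking on $c$ is a reasonable (if unstated in the paper) bit of bookkeeping and does not change the argument.
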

\begin{proof}
    Consider the loss regression problem, where we draw $(x,y) \in \X \times \zo \sim \mD$ and then return the pair $(x, z = \ell(y, p(x))$. We will use Claim \ref{lem:improve-1}, where we take $h_1 = \clp$ to be the self-entropy predictor and $h_2 = \lossPred$. 
    We can estimate the residual error of the self-entropy predictor as
    \begin{align}
    \label{eq:rewrite}
        \ell(y, p(x)) - \clp(p(x)) &= H_{\ell}(p(x)) + (y - p(x))H_{\ell}'(p(x)) - H_{\ell}(p(x))\notag\\
        &= (y - p(x))H_{\ell}'(p(x)).
    \end{align}
    By Claim \ref{lem:improve-1}, we have
    \begin{align*}
        \alpha &= \ex[(\ell(y, p(x)) - \clp(p(x)))^2] - \ex[(\ell(y, p(x)) - \lossPred(\phi(p, x))^2]\\
        &\leq 2\E[(\lossPred(\phi(p, x)) - \clp(p(x)))(\ell(y, p(x)) - \clp(p(x)))\\ &= 2\E[\delta(\phi(p, x))H_{\ell}'(p(x))(y - p(x))]
    \end{align*}
\end{proof}

Conversely to the result of Lemma~\ref{lem:adv-implies-mc-err}, we show that we can leverage certain types of multicalibration failures to predict loss with an advantage over the self-entropy predictor.

\begin{lemma}\label{lem:mc-err-implies-adv}
    Assume there exists a function $\delta:\Phi \to [-1,1]$ such that
    \[ \E[\delta(\phi(p, x))H_{\ell}'(p(x))(y - p(x))] \geq \beta \geq 0.\]
    i.e., the function $c(\phi(p, x)) = \delta(\phi(p, x))H_{\ell}'(p(x))$ is a witness for a multicalibration violation of magnitude $\beta$. Define the loss predictor
    \[ \lossPred(\phi(p, x)) = \Pi_{[0,1]}(\clp(p(x)) + \beta 
    \delta(\phi(p, x))).\] 
    Then $\adv(\lossPred) \geq \beta^2$. 
\end{lemma}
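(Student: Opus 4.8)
The plan is to mirror the proof of Lemma~\ref{lem:adv-implies-mc-err}, but using Claim~\ref{lem:improve-2} (the gradient-descent update) in place of Claim~\ref{lem:improve-1}. As in that lemma, I would set up the loss regression problem: draw $(x,y) \sim \mD$ and form the pair $(x, z = \ell(y, p(x)))$, giving a distribution $\mD'$ over $\X \times [0,1]$, and take the base hypothesis $h_1 = \clp$ to be the self-entropy predictor viewed as a function of $x$ via $h_1(x) = \clp(p(x)) = H_\ell(p(x))$.

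The key identity is Equation~\eqref{eq:rewrite}, which already tells us that the residual of the self-entropy predictor on the loss regression problem is $z - h_1(x) = \ell(y,p(x)) - \clp(p(x)) = (y - p(x)) H_\ell'(p(x))$. Substituting this into the hypothesis of the lemma, the multicalibration violation assumption
\[ \E[\delta(\phi(p,x)) H_\ell'(p(x)) (y - p(x))] \geq \beta \]
is exactly the statement that $\E_{\mD'}[\delta(\phi(p,x))(z - h_1(x))] \geq \beta \geq 0$. Since $\delta$ takes values in $[-1,1]$, the update direction is admissible for Claim~\ref{lem:improve-2}.

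Applying Claim~\ref{lem:improve-2} with this $\delta$ and step size $\beta$, the updated hypothesis $h_2(x) = \Pi_{[0,1]}(h_1(x) + \beta\, \delta(\phi(p,x))) = \Pi_{[0,1]}(\clp(p(x)) + \beta\,\delta(\phi(p,x)))$, which is precisely the definition of $\lossPred$ in the statement (and it indeed maps into $[0,1]$, as required of a loss predictor), satisfies
\[ \E_{\mD'}[(h_1(x) - z)^2] - \E_{\mD'}[(h_2(x) - z)^2] \geq \beta^2. \]
Finally, I would observe that the left-hand side is by definition exactly $\adv(\lossPred)$: the first term is $\E[(\ell(y,p(x)) - \clp(p(x)))^2]$ and the second is $\E[(\ell(y,p(x)) - \lossPred(\phi(p,x)))^2]$. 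This yields $\adv(\lossPred) \geq \beta^2$, completing the proof.

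There is essentially no substantial obstacle here — the lemma is the clean converse of Lemma~\ref{lem:adv-implies-mc-err}, and all the work has been front-loaded into Claim~\ref{lem:improve-2} and the rewriting in Equation~\eqref{eq:rewrite}. The only points requiring a word of care are (i) checking that the range constraints ($\delta \in [-1,1]$, and $\lossPred \in [0,1]$ after projection) line up with the hypotheses of Claim~\ref{lem:improve-2}, and (ii) correctly identifying the "residual" $z - h_1(x)$ on the loss regression instance via the proper-loss decomposition, so that the multicalibration witness condition matches the correlation-with-residuals condition verbatim.
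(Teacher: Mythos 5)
Your proposal is correct and follows essentially the same route as the paper's own proof: set up the loss regression problem with $z = \ell(y,p(x))$ and $h_1 = \clp$, use Equation~\eqref{eq:rewrite} to identify the residual as $(y-p(x))H_\ell'(p(x))$ so that the multicalibration hypothesis becomes the correlation condition of Claim~\ref{lem:improve-2}, and then read off $\adv(\lossPred)\geq\beta^2$. No gaps.
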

\begin{proof}
    We again consider the loss regression problem, 
    We now apply Lemma \ref{lem:improve-2} with $z = \ell(y, p(x))$, $h_1 = \clp$. The correlation condition we require is
    \[  \E[\delta(\phi(p, x))(\ell(y, p(x) - \clp(p(x)))] \geq \beta, \]
    By Equation \eqref{eq:rewrite}, we have
    \[ \E[\delta(\phi(p, x))(\ell(y, p(x)) - \clp(p(x)))] = \E[\delta(\phi(p, x))H_{\ell}'(p(x))(y - p(x))] \]
    which is at least $\beta$ by our assumption. Hence Claim \ref{lem:improve-2} implies that $h_2 = \lossPred$ has advantage $\beta^2$ over $\clp$.     
\end{proof}

\section{Loss prediction for multiple losses}
\label{sec:multiple-loss}

Up to this point, our discussion has focused on loss prediction for a single, predetermined loss function. However, in real-world applications, multiple stakeholders may use a predictor, each with unique objectives and priorities that correspond to different loss functions. This scenario would require training separate loss predictors for each user to meet their individual needs.

The self-entropy predictor offers a key advantage: it can be computed for any loss function using only the predictions $p(x)$, eliminating the need for additional training. Moreover, by extending the result of Theorem~\ref{thm:mc-conv-vs-loss-pred}, we can define a class test functions $\calC$ such that when $p$ is multicalibrated with respect to $\calC$, its self-entropy predictions simultaneously compete with the best-in-class loss predictor for each loss in a rich class of losses $\calL$, rather than just a fixed loss. We formalize this in the following lemma, which we prove in Appendix~\ref{sec:many-losses-mc-pf}:

\begin{lemma}\label{lem:many-losses-mc}
    Let $\calF$ be a class of loss predictors $f: \Phi \rightarrow [0, 1]$. Let $\calL$ be a class of bounded proper losses $\ell: \{0, 1\} \times [0, 1] \rightarrow [0, 1]$ with associated concave entropy functions $H_{\ell}: [0, 1] \rightarrow [0,1]$, and let $\calC_{\calL}$ be the class of test functions 
    \[\calC_{\calL} = \{(f(\phi(p, x)) - H_{\ell}(p(x)))H'_{\ell}(p(x)) : f \in \calF, \ell \in \calL\}.\]
    Then,
    \[\max_{\ell \in \calL}\max_{\lossPred \in \calF} \adv(\lossPred) \leq 2\MCE(\calC_{\calL}, p).\]
    I.e., no loss predictor from $\calF$ for any loss $\ell \in \calL$ can obtain better advantage than $2\MCE(\calC_{\calL}, p)$ over the self-entropy predictor. 
\end{lemma}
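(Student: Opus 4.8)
The plan is to reduce the statement to Lemma~\ref{lem:adv-implies-mc-err}, applied separately for each loss $\ell \in \calL$, and then to notice that every multicalibration witness produced this way already lies inside the single test class $\calC_{\calL}$, so that one $\MCE$ quantity controls all of them simultaneously. This is really the ``$\Rightarrow$'' direction of Theorem~\ref{thm:mc-conv-vs-loss-pred}, made uniform over a family of losses.

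Concretely, I would fix an arbitrary $\ell \in \calL$ and an arbitrary loss predictor $f \in \calF$, and let $\alpha$ denote the advantage of $f$ over the self-entropy predictor $H_{\ell}(p(\cdot))$ for the loss $\ell$. Pairs $(\ell,f)$ with $\alpha \le 0$ satisfy the claimed bound trivially since $\MCE(\calC_{\calL}, p) \ge 0$, so assume $\alpha \ge 0$. Lemma~\ref{lem:adv-implies-mc-err}, instantiated with this $\ell$ and this $f$, then gives
\[ \E\!\left[ \big(f(\phi(p,x)) - H_{\ell}(p(x))\big) H_{\ell}'(p(x)) \,(y - p(x)) \right] \;\ge\; \alpha/2 , \]
that is, $c(\phi(p,x)) = \big(f(\phi(p,x)) - H_{\ell}(p(x))\big) H_{\ell}'(p(x))$ witnesses a multicalibration violation of magnitude $\alpha/2$. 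The key point is that this $c$ is, by definition, precisely the element of $\calC_{\calL}$ indexed by the chosen $f$ and $\ell$. I would also verify it is an admissible weight function in the sense of Definition~\ref{def:mc}: since $f, H_{\ell} \in [0,1]$ we have $f - H_{\ell} \in [-1,1]$, and $H_{\ell}'(v) = \ell(1,v) - \ell(0,v) \in [-1,1]$ because $\ell$ is $[0,1]$-valued, so $c$ maps into $[-1,1]$. Hence $\MCE(\calC_{\calL}, p) \ge \alpha/2$, i.e. $\adv(f) = \alpha \le 2\,\MCE(\calC_{\calL}, p)$; taking the maximum over $f \in \calF$ and $\ell \in \calL$ yields the lemma.

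I do not anticipate any real obstacle: the entire content is already in Lemma~\ref{lem:adv-implies-mc-err}, and the only care needed is bookkeeping — checking that the $H_{\ell}'$-weighted witness it produces lands in the specified class $\calC_{\calL}$ (and not some strictly larger class) and is $[-1,1]$-valued, both immediate from the construction and the boundedness hypothesis on $\calL$. An equivalent and even shorter route is to observe that $\calC_{\calL} = \bigcup_{\ell \in \calL} \calC_{\ell}$, where $\calC_{\ell}$ is the weight class appearing in Theorem~\ref{thm:mc-conv-vs-loss-pred} for the fixed loss $\ell$; then $\MCE(\calC_{\calL}, p) = \max_{\ell \in \calL}\MCE(\calC_{\ell}, p)$, and applying the left-hand inequality of Theorem~\ref{thm:mc-conv-vs-loss-pred} to each $\ell$ finishes the argument. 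The one conceptual remark worth highlighting in the writeup is why a single class $\calC_{\calL}$ suffices for all losses at once: it is indexed by pairs $(f,\ell)$, so an auditor need not commit in advance to the downstream user's loss function.
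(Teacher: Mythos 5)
Your proposal is correct and is essentially the paper's argument: the paper applies the left-hand inequality of Theorem~\ref{thm:mc-conv-vs-loss-pred} (which is exactly Lemma~\ref{lem:adv-implies-mc-err}) to each fixed $\ell$ and then takes the maximum over $\calL$ using $\calC_{\calL} = \bigcup_{\ell \in \calL} \calC_{\ell}$ — the ``shorter route'' you mention at the end. Your extra checks (nonpositive advantage is trivial; the witness is $[-1,1]$-valued) are harmless bookkeeping the paper leaves implicit.
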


When $\calL$ is the set of all proper losses, the form of multicalibration imposed by $\calC_{\calL}$ can be thought of as the extension to multicalibration of the notion of \emph{proper calibration}, recently proposed by~\cite{OKK25}. The proper calibration error of a predictor $p$ is defined as 

\[\text{PCE}(p) = \max_{\ell \in \calL_{\text{prop}}}\left|\ex[H'_{\ell}(p(x))(y - p(x))]\right|\]
where $\calL_{\text{prop}}$ denotes the set of proper losses. Our condition can be thought of as ``proper multicalibration'' where each test function consists of $H_{\ell}'(p(x))$ multiplied with an additional test function $\delta(\phi(p, x))$, that may depend on other features in addition to the prediction value. 

\subsection{Achieving efficient multicalibration for many losses}

As the class of losses we consider expands, training an effective loss predictor for each individual loss becomes increasingly challenging. This section demonstrates that in certain scenarios, it is possible to efficiently produce a multicalibrated predictor with respect to the class of tests outlined in Lemma~\ref{lem:many-losses-mc}, even for some infinite classes of losses. This approach allows us to learn a single predictor $p$ whose self-entropy estimates can compete with the best $\lossPred \in \calF$ for every 
$\ell \in \calL$, thus eliminating the need to train separate predictors for each loss.

This result relies on the existence of a ``finite approximate basis'' (Definition~\ref{def:finite-approx-basis}) for the class of functions $\{H_{\ell}'\}_{\ell \in \calL}$, and is inspired by the techniques of~\cite{OKK25}, who use a similar approach to show the efficiency of proper calibration when $\{H_{\ell}'\}_{\ell \in \calL}$ has a finite approximate basis. 

We show a general version of this result in Theorem~\ref{thm:general-approx-basis-mc}, and instantiate it here for the class of 1-Lipschitz proper losses, $\calL_{Lip}$. 

The instantiation relies on a result proved by \cite{OKK25}, who show that $\{H_{\ell}'\}_{\ell \in \calL_{Lip}}$ has such a finite basis. We show efficiency in terms of oracle access to a weak-agnostic-learner for $\calF$, the class of loss predictors. We motivate this assumption by observing that if we care about learning a loss predictor from the class $\calF$, it's reasonable to assume that we have access to a weak agnostic learner for $\calF$. We formally define a weak agnostic learner as follows. 

\begin{definition}[Weak agnostic learner]\label{def:weak-agnostic-learner}
    Let $\alpha \geq 0$, $\delta \geq 0$. An $\alpha$-weak agnostic learner for $\calF \subseteq \{f: \Phi \rightarrow [-1, 1]\}$, closed under negation, with sample complexity $n$ and failure parameter $\delta$ is an algorithm that when given $n$ samples from a distribution $\calU$ over $\Phi \times [-1, 1]$, outputs $f \in \calF \cup \{\bot\}$ such that with probability at least $1 - \delta$ over the samples from $\calU$ and the randomness in the algorithm itself, if $\max_{f \in \calF} \ex_{(\phi, z) \sim \calU}[f(\phi)z] \geq \alpha,$
    the algorithm returns a $f \in \calF$ such that 
    $\ex_{(\phi, z) \sim \calU}[f(\phi)z] \geq \alpha/2.$
    Otherwise, if for all $f \in \calF$, 
    $\ex_{(\phi, z) \sim \calU}[f(\phi)z] \leq \alpha,$ the algorithm either returns $f = \bot$ or $f \in \calF$ such that $\ex_{(\phi, z) \sim \calU}[f(\phi)z] \geq \alpha/2.$
\end{definition}

With this definition in hand, we are ready to present the main theorem of this section. The proof can be found in Appendix~\ref{sec:1-lip-mc-pred-pf}.

\begin{theorem}\label{thm:1-lip-mc-pred}
    Fix $\delta, \epsilon > 0$. Let $\calL_{Lip}$ be the class of proper 1-Lipschitz losses $\ell:\{0, 1\} \times [0, 1] \rightarrow [0, 1]$, and let $\calF$ be a class of loss predictors $\calF: \Phi \rightarrow [-1, 1]$ that is closed under negation and contains the class of self entropy predictors, $\calH_{\calL_{Lip}} = \{H_{\ell}\}_{\ell \in \calL_{Lip}}$. Further assume that we have access to an $\alpha$-weak-agnostic-learner for $\calF$ with sample complexity $n$ and failure parameter $\beta \leq \frac{\alpha^2\delta}{4\lceil 2/\epsilon + 1\rceil}$. 

    Then, there exists an algorithm that, given $m = O(n/\alpha^2)$ samples, with probability at least $1 - \delta$ outputs a predictor $p$ such that 
    \[\max_{\ell \in \calL_{Lip}} \max_{\lossPred \in \calF} \adv(\lossPred) \leq 16\alpha + 4\epsilon.\]
\end{theorem}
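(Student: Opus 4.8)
The plan is to reduce this to a multicalibration problem and invoke a standard multicalibration boosting algorithm, with two twists: we must handle the \emph{infinite} class of losses $\calL_{Lip}$ via a finite approximate basis, and we must only assume weak-agnostic-learning access to $\calF$ rather than to the full test class $\calC_{\calL_{Lip}}$. First I would recall from Lemma~\ref{lem:many-losses-mc} that it suffices to produce a predictor $p$ with $\MCE(\calC_{\calL_{Lip}}, p) \leq 8\alpha + 2\epsilon$, where $\calC_{\calL_{Lip}} = \{(f(\phi(p,x)) - H_\ell(p(x)))H'_\ell(p(x)) : f \in \calF, \ell \in \calL_{Lip}\}$. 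The key structural input (due to \cite{OKK25}) is that $\{H'_\ell\}_{\ell \in \calL_{Lip}}$ admits a finite $\epsilon$-approximate basis $g_1, \dots, g_k$ with $k = \lceil 2/\epsilon + 1\rceil$: every $H'_\ell$ is within $\epsilon$ (in sup norm, say) of some $g_j$. Substituting $g_j$ for $H'_\ell$ changes each test function's correlation by at most $\epsilon$ (since $f - H_\ell(p)$ is bounded and $y - p(x) \in [-1,1]$), so it suffices to drive down $\max_j \max_{f \in \calF} |\ex[(f(\phi(p,x)) - H_\ell(p(x))) g_j(p(x)) (y - p(x))]|$; and since $\calF$ contains the self-entropy predictors $H_\ell$, the term $H_\ell(p(x))g_j(p(x))$ is itself (up to sign and the bounded envelope) expressible via $\calF$, so this collapses to controlling $\max_j \max_{f \in \calF} |\ex[f(\phi(p,x)) g_j(p(x))(y - p(x))]|$ over the $k$ fixed functions $g_j$.

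Next I would run the standard multicalibration / boosting loop: maintain a current predictor $p$ (initialized to, say, the constant $1/2$), and at each round, for each $j \in [k]$, call the $\alpha$-weak-agnostic-learner on the distribution over $(\phi(p,x), z)$ with $z = g_j(p(x))(y - p(x)) \in [-1,1]$. If some call returns $f \in \calF$ with empirical correlation $\geq \alpha/2$, then $\delta(\phi(p,x)) = f(\phi(p,x))$ is a witness of a multicalibration violation of magnitude $\geq \alpha/2$ for the test function $f(\phi(p,x))g_j(p(x))$; by Claim~\ref{lem:improve-2} (equivalently the boosting step of \cite{hebert2018multicalibration}), the update $p \leftarrow \Pi_{[0,1]}(p + \tfrac{\alpha}{2}\,\delta(\phi(p,x))g_j(p(x)))$ — note the extra $g_j(p(x))$ factor so the update direction is the actual test function — decreases $\ex[(p(x) - y)^2]$ by $\Omega(\alpha^2)$. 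Since the squared loss is bounded in $[0,1]$, this can happen at most $O(1/\alpha^2)$ times, so the loop terminates; I'd set $m = O(n/\alpha^2)$ so that each of the $O(k/\alpha^2)$ learner invocations gets $n$ fresh samples, and union-bound the failure parameter: requiring $\beta \leq \frac{\alpha^2 \delta}{4\lceil 2/\epsilon+1\rceil}$ exactly ensures total failure probability $\leq \delta$ over all $O(k/\alpha^2)$ calls. Upon termination, every weak-learner call returned $\bot$ (or a low-correlation $f$), which by the weak-agnostic-learner guarantee certifies $\max_{f \in \calF}\ex[f(\phi(p,x))g_j(p(x))(y-p(x))] \leq \alpha$ for every $j$, up to sampling error $O(\alpha)$ absorbed into constants.

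Finally I would chain the error terms: weak-learning slack contributes $O(\alpha)$, the basis approximation contributes $O(\epsilon)$ per use of $g_j$ in place of $H'_\ell$, and there are two such substitutions (one for the $f$ term, one for the $H_\ell(p(x))$ term), plus a sampling error term; collecting constants gives $\MCE(\calC_{\calL_{Lip}}, p) \leq 8\alpha + 2\epsilon$, and Lemma~\ref{lem:many-losses-mc} then yields $\max_{\ell}\max_{\lossPred}\adv(\lossPred) \leq 2\MCE \leq 16\alpha + 4\epsilon$, as claimed. I expect the main obstacle to be bookkeeping the reduction cleanly: making precise the claim that controlling correlations against the $k$ fixed functions $g_j$ suffices to bound $\MCE(\calC_{\calL_{Lip}}, p)$ — this requires carefully tracking that (i) replacing $H'_\ell$ by $g_j$ is cheap because the other factor $(f(\phi(p,x)) - H_\ell(p(x)))$ is bounded, (ii) the envelope of test functions stays in $[-1,1]$ (or a fixed constant range) so Claim~\ref{lem:improve-2}'s boundedness hypothesis applies to the update direction $\delta(\phi(p,x))g_j(p(x))$, and (iii) the self-entropy membership assumption $\calH_{\calL_{Lip}} \subseteq \calF$ is what lets the weak-agnostic-learner for $\calF$ stand in for a weak-agnostic-learner for the richer difference class. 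The boosting-potential and sample-complexity arguments are then routine.
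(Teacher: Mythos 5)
Your proposal follows essentially the same route as the paper: reduce to bounding $\MCE(\calC_{\calL_{Lip}},p)$ via Lemma~\ref{lem:many-losses-mc}, use $\calH_{\calL_{Lip}}\subseteq\calF$ to absorb the $H_\ell(p(x))H'_\ell(p(x))$ term into the product class $\calC_{\calF,\calH'_{\calL}}$ (the paper's Lemma~\ref{lem:loss-to-prod-class-mc}), replace the infinite family $\{H'_\ell\}$ by the finite approximate basis of \cite{OKK25}, and run the product-class multicalibration boosting loop (Algorithm~\ref{alg:product-class}) with the weak agnostic learner for $\calF$, with the same potential argument, sample splitting, and union bound over the $O(k/\alpha^2)$ learner calls.

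One point needs correcting. You describe the basis as a sup-norm covering net (``every $H'_\ell$ is within $\epsilon$ of \emph{some} $g_j$''). That is not what Lemma~\ref{lem:1-lip-basis} provides, and no finite sup-norm net exists for this class: the superderivatives $H'_\ell$ of $1$-Lipschitz proper losses include essentially arbitrary bounded monotone functions (e.g.\ step functions with jumps at different thresholds), which are pairwise far apart in sup norm. The correct statement (Definition~\ref{def:finite-approx-basis}) is that each $H'_\ell$ is $\epsilon$-close to a \emph{linear combination} $\sum_i \alpha_i g_i$ with coefficient norm $\sum_i|\alpha_i|\le\lambda=4$. This is also where your constant accounting goes astray: the triangle inequality in Lemma~\ref{lem:basis-mce-bound} yields $\MCE(\calC_{\calA,\calB},p)\le\lambda\,\MCE(\calC_{\calA,\calG},p)+\epsilon$, and it is precisely the factor $\lambda=4$, combined with the factor $4$ from the two reduction lemmas, that produces $16\alpha$ in the final bound; under your net interpretation you would instead land at roughly $4\alpha+4\epsilon$. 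The fix is local --- substitute the linear-combination form of the basis and carry the coefficient norm through --- and the rest of your argument then matches the paper's proof.
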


In other words, our learned $p$'s self-entropy predictions compete with the best-in-class loss predictor with \emph{every} $\ell \in \calL_{Lip}$, up to an error of $16\alpha + 4\epsilon$.

\section{Experiments}
\label{sec:exp}

We have shown in Section \ref{sec:lp-mc} a correspondence between the advantage a loss predictor has over the self entropy predictor and the multicalibration error. In this section, we empirically demonstrate this correspondence and see that it holds across several base models, loss prediction algorithms, as well as data subgroups.

\paragraph{Experiment design.} We follow the basic design set forth in \cite{hansen2024mcp} for working with binary prediction tasks on UCI tabular datasets, specifically Credit Default \cite{default_of_credit_card_clients_350} and Bank Marketing \cite{bank_marketing_222}. For each dataset, we consider certain subgroups (13 and 15 different groups respectively) defined by combinations of features such as occupation, education, and gender (see Appendix C.4 \cite{hansen2024mcp} for full details). These subgroups are used to evaluate the multicalibration error of the predictors as follows: for each subgroup we measure the \emph{smoothed Expected Calibration Error} (smECE) \cite{blasiok2023smece}, and take the multicalibration error to be the maximum smECE obtained.

We examine base predictors from different model families at various levels of multicalibration, specifically Naive Bayes and Support Vector Machines (SVMs), which tend to be uncalibrated without any postprocessing, along with Random Forests, Logistic Regression, Decision Trees, and Multilayer Perceptrons (MLPs), which tend to be well-calibrated out of the box when trained with empirical risk minimization. The base predictor MLP we use has a three-hidden-layer architecture with ReLU activations. For further details on hyperparameters, architectures, and training, we point the reader to Appendix E in \cite{hansen2024mcp}. 

We then run the following four loss prediction algorithms: decision tree regression, XGBoost, support vector regression (SVR), and a three-hidden-layer MLP. Each of these is input aware, that is, it is given both $\inp(x)$ and $p(x)$ at train time, and is trained using a standard regression objective to minimize $\ex_{(x, y) \sim \calD}[\big(\lossPred(\inp(x), p(x)) - \ell(y, p(x)) \big)^2]$. Our target loss $\ell$ will be the squared loss $\ell(y, p(x)) = (y-p(x))^2$. 

\paragraph{Results.} Our main takeaways are as follows:
\begin{itemize}
    \item Loss predictors perform better as the multicalibration error of the base model increases.

    \item Loss predictors perform better on subgroups that exhibit higher calibration error. 

\end{itemize}
 
The first takeaway is demonstrated by Figure \ref{fig:cd:agg_adv}, where the horizontal axis indicates the max smECE of the base model, and the vertical axis indicates the advantage the loss predictor attains over the self-entropy predictor of the base model. 
As our theory predicts, we see a clear positive correlation between the maximum smECE and the relative performance of the loss predictor. In other words, less multicalibrated models have better performing loss predictors. This correlation holds across different base models and different algorithms for the loss predictor.

\begin{figure}[H] 
    \centering
    \includegraphics[width=16cm]{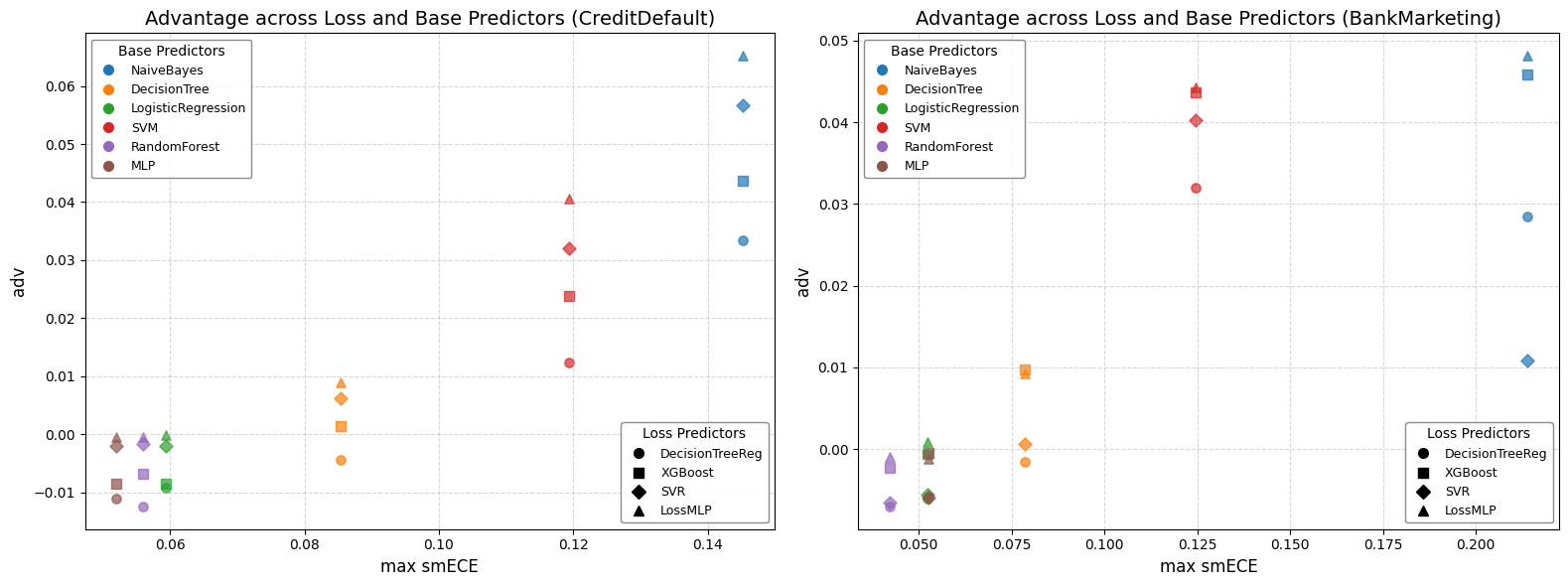}
    \caption{\small Advantage vs.\ max smECE across base predictors and loss prediction algorithms. For any particular loss predictor (shape), we see that as the multicalibration error of the base model (color) grows, the loss predictor's advantage improves.}
    \label{fig:cd:agg_adv}
\end{figure}

To delve deeper, we examine how loss prediction advantage varies across different subgroups. For this experiment we vary the base predictor only, while fixing the loss prediction algorithm to be an MLP. In Figure \ref{fig:cd:group_adv}, for each base predictor and each subgroup, we report the loss predictor's advantage restricted to the subgroup on the vertical axis and the smECE of the subgroup on the horizontal axis. 

For base models that are poorly calibrated overall (Decision Tree, SVM, and Naive Bayes), we see a clear correlation showing the loss predictor performs better on subgroups as the calibration error gets worse. By contrast, base predictors that are well-calibrated overall (Logistic Regression, Random Forest, and MLPs) allow negligible loss prediction advantage even after stratifying by subgroup.

\begin{figure}[H] 
    \centering
    \includegraphics[width=16cm]{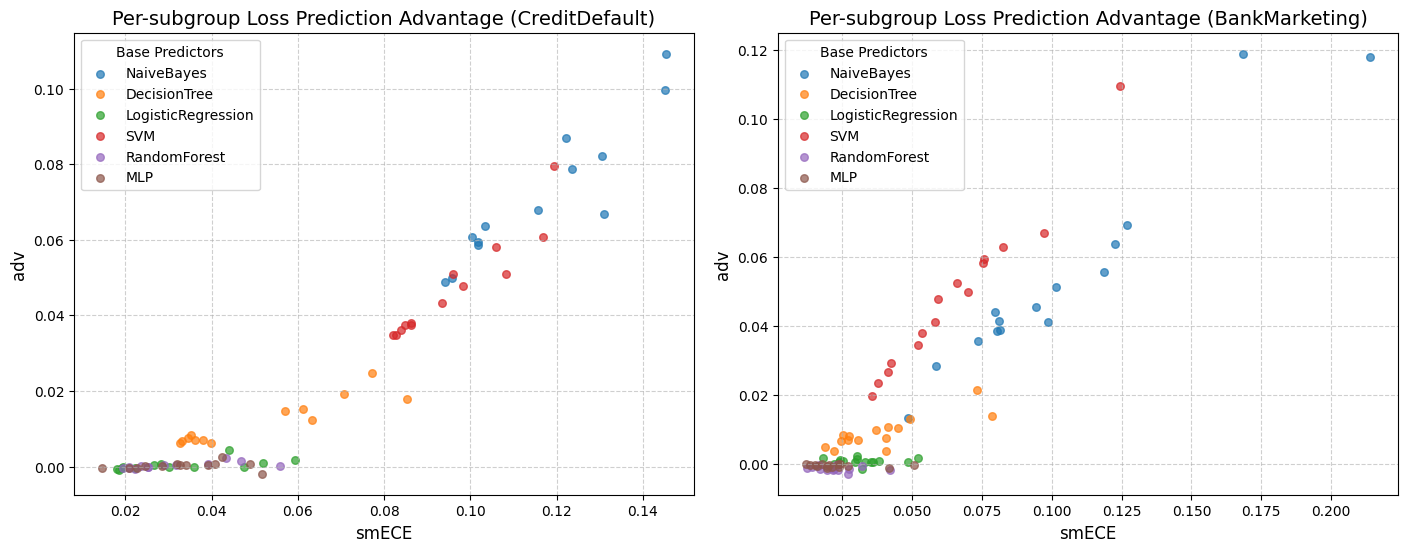}
    \caption{\small Fixing the type of loss predictor to be an MLP, we plot the loss advantage vs.\ the smECE on each subgroup across different base predictor models for the Credit Default and Bank Marketing datasets. For a fixed base predictor (color), the loss predictor exhibits more advantage on subgroups where the base predictor is less calibrated.}
    \label{fig:cd:group_adv}
\end{figure}

\section{Related work}
\label{sec:related}

\paragraph{Applications of loss prediction.} The idea of loss prediction has its roots in Bayesian and decision-theoretic active learning  \cite{settles2009active,ren2021survey}, wherein the loss expected to be incurred at a point provides a natural measure of how valuable it is to label; see e.g.\ the well-known method of Expected Error Reduction (EER) \cite{roy2001toward}. To our knowledge, loss prediction in the explicit sense that we consider in this paper was first studied by \cite{yoo2019learning}, who proposed training an auxiliary loss prediction module alongside the base predictor. This is a practical approach used in many real-world applications. An important example from industry is the popular Segment Anything Model \cite{kirillov2023segment}, which is an image segmentation model that includes an IoU prediction module\footnote{IoU, or intersection over union, is a standard segmentation error metric.}. This module plays a key role in the continual learning ``data engine'' used to train the model. Other applications of loss prediction are in routing inputs to weak or strong models \cite{dinghybrid, ong2024routellm, hu2024routerbench}, diagnosing model failures \cite{jain2022distilling}, and MRI reconstruction \cite{hu2021learning}.

Loss prediction is inherently connected to the broader topic of uncertainty quantification \cite{hullermeier2021aleatoric,abdar2021review}. The work of \cite{lahlou2021deup} formulates epistemic uncertainty as a form of excess loss or risk (see also \cite{xu2022minimum}) and estimates it using an auxiliary loss predictor. Loss decomposition and connections to calibration have also been studied by \cite{kull2015novel,ahdritz2024provable}, although these works do not discuss auxiliary loss predictors per se.

\paragraph{Related theoretical work.}  We are not aware of prior theoretical work that studies the complexity of loss prediction. We build on notions and techniques from prior work on calibration \cite{decisionCal, gopalan2022low, kleinberg2023u}, multicalibration \cite{hebert2018multicalibration, kim2022universal}, omniprediction \cite{omni, lossOI, OP3, OKK25} and outcome indistinguishability \cite{OI, OI2}. Multicalibration has found applications to a myriad areas beyond multigroup fairness; a partial list includes omniprediction \cite{omni}, domain adaptation \cite{kim2022universal}, pseudorandomness \cite{OI, DworkLLT23} and computational complexity \cite{CasacubertaDV24}. Our work adds loss prediction to this list.  

\paragraph{Decision calibration, decision OI and proper calibration.}
The work of \cite{decisionCal} on decision calibration  (implicitly) considered the self-entropy predictor, and
conditions under which this predictor is accurate in expectation. The subsequent work of \cite{lossOI} termed this condition decision OI and showed that calibration  of the predictor guarantees that the self-entropy predictor is itself calibrated for loss prediction. 
As we have seen, however, calibration of the predictor is not necessary for the self-entropy predictor to be calibrated (or optimal), due to the existence of blind-spots for a specific loss. This is explained by the notion of proper calibration introduced in
\cite{OKK25}, who showed that it tightly characterizes decision OI.

While our results have strong connections to all these works, the key difference is that our goal in loss prediction is not just to give loss estimates that are calibrated, it is to predict the true loss in the regression sense, which is potentially a much harder task.

 \paragraph{Swap multicalibration.} Our results equate the ability to gain an advantage over the self-predictor to a lack of multicalibration. This recalls a result of \cite{OP3} which characterizes swap omniprediction by multicalibration: there too, the ability to achieve better loss than a simple baseline is attributable to a lack of multicalibration. Another related work is that of \cite{Globus-HarrisHK23}, which views multicalibration as a boosting algorithm for regression. Their work also connects multicalibration and regression, but  the regression task they analyze is predicting $y$, whereas the regression task that we analyze is predicting $\ell(y, p(x))$.

\paragraph{Representation-aware multicalibration.} 
Internal representation-aware multiaccuracy is considered in the work of \cite{kgz} who use it in the context of face-recognition. Internal representation-aware multicalibration  has connections to the notion of Code-Access Outcome Indistinguishability (OI), proposed by~\cite{dwork2021outcome}. In Code-Access OI, outcomes generated by $p(x)$ must be indistinguishable from the true outcomes generated from the target distribution with respect to a set of tests that can inspect the full definition and code of $p$. With code access, such tests can compute the internal features $r_p(x)$ that are available in internal representation-aware multicalibration, but also have other capabilities such as querying $p$ on other points $x' \in \calX$. The use of external representations for auditing/improving predictions is found in the work of \cite{BuolamwiniG18}
who use skin type to assess facial recognition; in recent work of $\cite{alur24}$, which investigates the use of expert opinions in addition to ML predictions to improve on medical test results, and in the work of \cite{jain2022distilling} who use representations from a foundation model that is distinct from the model they audit.  
\paragraph{Experimental work on multicalibration.} The work of \cite{kgz} considers internal-representation aware multiaccuracy for facial recognition tasks, and shows how auditing can be used to improve accuracy across subgroups. 
Recently, \cite{hansen2024mcp}  conducted a systematic investigation of multicalibration in practice, analyzing the utility of algorithms for multicalibration on a number of real-world datasets, prediction models, and demographic subgroups. We build closely on their setup for our own experiments.

\section*{Acknowledgements}

We thank Moises Goldszmidt, Shayne O'Brien, Daniel Tsai, Robert Fisher and Dor Shaviv for introducing us to this problem and for sharing their insights on and experience with practical loss prediction and its applications. AK is additionally
supported by the Paul and Daisy Soros Fellowship for New Americans. CP is supported by the Apple Scholars in AI/ML PhD fellowship.

\bibliographystyle{alpha}
\bibliography{ref}
\appendix
\section{Handling non-proper losses}
\label{sec:non-proper}

We consider an abstract action space $\mA$; examples are the discrete setting where $\mA = [k]$, and the continuous setting where $\mA = \R$. 
A hypothesis is a function $h: \X \to \mA$. 
A loss function is a function $\ell: \zo \times \mA \to [0,1]$. The expected loss of hypothesis $h$ at the point $x$ is given by $\ex[\ell(y, h(x))|x]$. The goal of a loss predictor is to learn a function $\lossPred: \Phi \to \R$ that gives pointwise estimates of this quantity. As in definition \ref{def:lp}, we can define a hierarchy of loss predictors based on the features available to them.

For any loss $\ell$, if the labels are drawn according to $y \sim \Ber(p)$ for any $p \in [0, 1]$, then the optimal prediction that minimizes the loss, $k_{\ell}(p) \in [0, 1]$ is defined by 
\[k_{\ell}(p) = \argmin_{v \in [0, 1]}\ex_{y \sim \Ber(\pbayes(x))}[\ell(y, v)]
\footnote{In the event that there is no unique minimum, we allow $k_\ell$ to output a subset of $\mA$, so it is strictly speaking a relation rather than a function. We blur this distinction for simplicity}.\]

If there exist a {\em latent} predictor $p_h:\X \to [0,1]$ so that $h = k_\ell \circ p_h$ is obtained by best-responding to its predictions, then we can reduce to the setting of proper losses, since 
\[ \ex[\ell(y, h(x))] = \ex[\ell(y, k_\ell(p(x)) = \ex [\ell \circ k_\ell(y, p(x))] \]
and we have the following result of \cite{kleinberg2023u}.

\begin{lemma}[\cite{kleinberg2023u}]
\label{lem:klst}
    For any loss $\ell:\zo \times \mA \to [0,1]$, the loss function $\ell \circ k_\ell: \zo \times [0,1] \to \R$ is a proper loss.
\end{lemma}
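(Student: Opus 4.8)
The plan is to verify directly that $\ell \circ k_\ell$ satisfies the defining property of a proper loss, namely that for every $p^* \in [0,1]$, the expected value $\ex_{y \sim \Ber(p^*)}[(\ell \circ k_\ell)(y, v)]$ is minimized over $v \in [0,1]$ at $v = p^*$. Unwinding the definition of composition, $(\ell \circ k_\ell)(y, v) = \ell(y, k_\ell(v))$, so the quantity to analyze is
\[
G(p^*, v) := \ex_{y \sim \Ber(p^*)}[\ell(y, k_\ell(v))].
\]
The key observation is that $G(p^*, v)$ depends on $v$ \emph{only through the action} $a = k_\ell(v) \in \mA$, and moreover $a$ ranges over exactly the set of actions that are ``best responses'' to some Bernoulli parameter. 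So the problem reduces to: among all actions of the form $k_\ell(v)$, which one minimizes the $\Ber(p^*)$-expected loss? By the very definition of $k_\ell(p^*)$ as $\argmin_{v' \in [0,1]} \ex_{y \sim \Ber(p^*)}[\ell(y, v')]$ — wait, that argmin is over predicted probabilities; I should be careful. The cleaner route: $k_\ell(p^*) \in \mA$ is by definition an action minimizing $a \mapsto \ex_{y\sim\Ber(p^*)}[\ell(y,a)]$ over $a \in \mathrm{range}(k_\ell)$ (indeed over all of $\mA$ in the usual formulation). Hence for any $v \in [0,1]$, the action $k_\ell(v)$ is a feasible competitor, so $G(p^*, p^*) = \ex_{y\sim\Ber(p^*)}[\ell(y, k_\ell(p^*))] \le \ex_{y\sim\Ber(p^*)}[\ell(y, k_\ell(v))] = G(p^*, v)$. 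This is exactly the properness inequality.

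The main step I would carry out carefully is setting up the definitions so that this argument is airtight despite the subtlety flagged in the footnote: $k_\ell$ may be set-valued when the minimizer is not unique. To handle this I would fix, once and for all, a selection $k_\ell(p) \in \argmin_{a \in \mA} \ex_{y \sim \Ber(p)}[\ell(y,a)]$ for each $p$, and note that the argument above only uses that (i) $k_\ell(p^*)$ lies in this argmin set, and (ii) $k_\ell(v)$ lies in $\mA$, both of which hold for any selection. One should also note that $\ell \circ k_\ell$ maps into $\R$ (in fact into $[0,1]$ since $\ell$ is bounded), as claimed; this is immediate. No concavity or differentiability input is needed here — properness of $\ell \circ k_\ell$ is a purely order-theoretic consequence of $k_\ell$ being a pointwise best response.

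I do not expect a serious obstacle; the only thing to get right is the quantifier structure (``best response to $p^*$ beats best response to any other $v$, when evaluated against $p^*$'') and the measure-theoretic triviality that the expectation over $y \sim \Ber(p^*)$ of a function of $(y, a)$ is well-defined for a fixed action $a$. If one wanted the stronger conclusion that $\ell \circ k_\ell$ is \emph{strictly} proper whenever best responses are unique and distinct for distinct parameters, that would require the additional hypothesis that $k_\ell$ is injective, which need not hold in general (e.g.\ $0/1$ loss has only two distinct best responses); so I would state the lemma only for properness, matching \cite{kleinberg2023u}.
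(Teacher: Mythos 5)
Your proof is correct: the paper itself states Lemma~\ref{lem:klst} as a cited result of \cite{kleinberg2023u} without reproducing a proof, and your argument is exactly the standard one --- $k_\ell(p^*)$ minimizes $a \mapsto \ex_{y\sim\Ber(p^*)}[\ell(y,a)]$ over all of $\mA$, so in particular it beats the feasible competitor $k_\ell(v)$, which is precisely the properness inequality for $\ell\circ k_\ell$. Your handling of the set-valued $\argmin$ by fixing a selection, and your remark that only properness (not strict properness) follows, are both appropriate.
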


But under what conditions on $h$ does there exist such a predictor $p_h$? And is it easy to estimate its predictions given access to $h$? 
 
To answer the first question, we show that it is equivalent to assuming that the hypothesis satisfies a simple optimality condition for the loss.

\begin{definition}
    The hypothesis $h:\X \to [0,1]$ is swap-optimal for $\mD$ if for every function $\kappa:\mA \to \mA$, it holds that $\ex[\ell(y, h(x))] \leq \ex[\ell(y, \kappa(h(x)))]$.
\end{definition}
Swap optimality is a weak optimality condition that can be easily achieved by post-processing. It is quite reasonable to assume that a well-trained model optimized to minimize loss satisfies this guarantee. For instance, a well-trained image classifier should not improve if every time it predicts {\em cat}, we say {\em dog} instead. 
For a swap optimal hypothesis $h$, we show that is indeed easy to identify a latent predictor $p_h$ so that $h$ is obtained by best-responding to its predictions. This theorem lets us extend our theory of loss prediction for proper losses to arbitrary loss functions under the rather weak assumption that $h$ is swap-optimal. 

\begin{theorem}
\label{thm:swap-proper}
Given a hypothesis $h:\X \to \mA$ and a distribution $\mD$, define the predictor $p_h: \X \to [0,1]$ by $p_h(x) = \ex_\mD[y|h(x)]$. The hypothesis $h$ is swap optimal iff $h(x)= k_\ell \circ p_h(x)$ for all $x \in \X$.\footnote{Strictly speaking, $k_\ell$ is not a function as there can be many optimal actions. However we interpret this equation as saying $h(x)$ is in the set of optimal actions for $p_h(x)$.}
\end{theorem}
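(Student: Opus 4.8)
The plan is to prove both directions at once by rewriting swap optimality after conditioning on the level sets of $h$. Enumerate the actions actually taken with positive probability, i.e.\ the $a \in \mA$ with $\pr_\mD[h(x)=a] > 0$, and for each such $a$ set $p_a := \ex_\mD[y \mid h(x)=a]$, so that $p_h(x) = p_{h(x)}$ by definition. The one identity we need is that for any fixed action $b \in \mA$,
\[
\ex_\mD[\ell(y,b) \mid h(x)=a] \;=\; p_a\,\ell(1,b) + (1-p_a)\,\ell(0,b) \;=\; \ex_{y \sim \Ber(p_a)}[\ell(y,b)],
\]
which holds because $\ell(\cdot,b)$ depends on $y$ only through the Bernoulli value $y \in \zo$ and $\ex_\mD[y \mid h(x)=a] = p_a$. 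Applying this with $b=a$ and with $b=\kappa(a)$ for a map $\kappa:\mA\to\mA$, and then taking the outer expectation over the law of $h(x)$, swap optimality of $h$ becomes exactly the statement that for every $\kappa:\mA\to\mA$,
\[
\sum_a \pr[h(x)=a]\,\ex_{y\sim\Ber(p_a)}[\ell(y,a)] \;\le\; \sum_a \pr[h(x)=a]\,\ex_{y\sim\Ber(p_a)}[\ell(y,\kappa(a))]
\]
(with sums replaced by integrals against the law of $h(x)$ if $\mA$ is uncountable).

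For the ``if'' direction, suppose $h(x) = k_\ell \circ p_h(x)$ for all $x$, read as: $a$ minimizes $\ex_{y\sim\Ber(p_a)}[\ell(y,\cdot)]$ over $\mA$ for every $a$ in the range of $h$. Then term by term $\ex_{y\sim\Ber(p_a)}[\ell(y,a)] \le \ex_{y\sim\Ber(p_a)}[\ell(y,\kappa(a))]$, and summing against the nonnegative weights $\pr[h(x)=a]$ yields the displayed inequality for every $\kappa$, hence swap optimality.

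For the ``only if'' direction, assume $h$ is swap optimal, fix any $a_0$ in the range of $h$ and any target $b \in \mA$, and apply swap optimality to the map $\kappa$ with $\kappa(a_0)=b$ and $\kappa(a)=a$ for $a \neq a_0$. Every summand with $a \neq a_0$ cancels between the two sides, leaving $\pr[h(x)=a_0]\,\ex_{y\sim\Ber(p_{a_0})}[\ell(y,a_0)] \le \pr[h(x)=a_0]\,\ex_{y\sim\Ber(p_{a_0})}[\ell(y,b)]$; since $\pr[h(x)=a_0] > 0$ we may cancel it, and since $b$ was arbitrary, $a_0$ minimizes $\ex_{y\sim\Ber(p_{a_0})}[\ell(y,\cdot)]$, i.e.\ $a_0 \in k_\ell(p_{a_0})$. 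As this holds for every action hit by $h$, we conclude $h(x) \in k_\ell(p_h(x))$ for ($\mD$-almost) every $x$.

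I do not anticipate a genuine obstacle: the whole content is the decomposition of the expected loss over $h$'s level sets plus the freedom in the ``only if'' direction to modify $\kappa$ at a single value. The only points needing a little care are bookkeeping: that $k_\ell$ is a relation rather than a function, so the equation $h = k_\ell \circ p_h$ must be interpreted as membership in the set of minimizers (as in the paper's footnote); that ``for all $x$'' should be ``for $\mD$-almost all $x$'' if one wants to be fully rigorous about $\mD$-null sets; and, when $\mA$ (or the range of $h$) is uncountable, replacing the sums over level sets by integrals against the pushforward of $\mD$ under $h$. None of these affect the argument.
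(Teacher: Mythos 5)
Your proof is correct and follows essentially the same route as the paper's: both arguments reduce swap optimality to a per-level-set comparison via the identity $\ex_\mD[\ell(y,b)\mid h(x)=a]=\ex_{y\sim\Ber(p_a)}[\ell(y,b)]$, with the ``only if'' direction obtained by perturbing $\kappa$ at a single action. Your write-up is in fact slightly more explicit than the paper's (which argues the first direction by contrapositive and states the converse tersely), and your caveats about $\mD$-null sets and $k_\ell$ being a relation match the paper's footnote.
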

\begin{proof}
    Assume that $h$ is not swap-optimal, so there exist $\kappa$ such that $\ex[\ell(y, \kappa(h(x)))] < \ex[\ell(y, h(x))]$. There must exist a specific choice of $h(x) = a \in \mA$ conditioned on which the inequality still holds, hence 
    \[ \ex[\ell(y, \kappa(a))|h(x) = a]  \leq \ex[\ell(y, a)|h(x) =a]. \]
    Let $\ex[y|h(x) =a] = v$. But this shows that when $y \sim \Ber(v)$, $\ex[\ell(y, \kappa(a)] < \ex[\ell(y,a)]$, so $a \neq k_\ell(v)$. Hence for all $x \in h^{-1}(a)$, we have $h(x) = a \neq k_\ell(v) = k_\ell(p_h(x))$. 

    Conversely, if $h$ is indeed swap optimal, then it must be the case that every action $a \in \mA$ is a best response to $\E[y|h(x) =a] = p_h(x)$, which means we have $h(x) = k_\ell(p_h(x))$.
\end{proof}

\section{Proofs from Section~\ref{sec:lp-mc}}

\subsection{Proof of Theorem~\ref{thm:mc-conv-vs-loss-pred}}\label{sec:mc-conv-cs-loss-pred-pf}

\begin{proof}[Proof of Theorem~\ref{thm:mc-conv-vs-loss-pred}]
    The inequality on the left follows from Theorem~\ref{lem:adv-implies-mc-err}, while the inequality on the right follows from Lemma~\ref{lem:mc-err-implies-adv}. We prove each in turn, starting with the left-hand inequality. 

    By Theorem~\ref{lem:adv-implies-mc-err}, if there exists a $f \in \calF$ such that setting $\lossPred = f$ gives $\adv(\lossPred) = \alpha$, then this implies that 
    \[\ex[(\lossPred(\phi(p, x) - \clp(p(x)))H_{\ell}'(p(x))(y - p(x))] \geq \alpha/2.\]
    We observe that because $\lossPred = f \in \calF$, the witness of this multicalibration violation, $(\lossPred(\phi(p, x) - \clp(p(x)))H_{\ell}'(p(x))$ lies in $\calC$, and thus 
    \begin{align*}
        \MCE(\calC, p) &= \max_{c \in \calC}\left| \ex[c(\phi(p, x))(y - p(x))]\right|\\
        & \geq \ex[(\lossPred(\phi(p, x) - \clp(p(x)))H_{\ell}'(p(x))(y - p(x))]\\
        &\geq \alpha/2\\
        &= \frac{1}{2}\adv(\lossPred)
    \end{align*}

    The inequality follows by taking the maximum over all $\lossPred \in \calF$, as $\lossPred$ was chosen arbitrarily. 

    We now move on to proving the inequality on the right, i.e., the upper bound on $\MCE(\calC, p)$. 

    By definition of the multicalibration error and $\calC$, there exists some $c \in \calC$ that witnesses a multicalibration error of magnitude $\MCE(\calC, p)$, i.e. for some $f \in \calF$, 

    \[\MCE(\calC, p) = \left|\underbrace{\ex[(f(\phi(p, x)) - H_{\ell}(p(x)))H_{\ell}'(p(x))(y - p(x))]}_{:= E_f}\right|. \]

    Thus, if we define $\delta: \Phi \rightarrow [-1, 1]$ as 
    \[\delta(\phi(p, x)) = \sgn(E_f)(f(\phi(p, x)) - H_{\ell}(p(x))),\]
    it follows that 
    \begin{align*}
        \ex[\delta(\phi(p, x))H_{\ell}'(p(x))(y - p(x))] &= \MCE(\calC, p).
    \end{align*}

    Applying Lemma~\ref{lem:mc-err-implies-adv} for this $\delta$ implies that for the loss predictor defined by $\lossPred(\phi(p, x)) = \Pi_{[0,1]}(\clp(p(x)) + \MCE(\calC, p) 
    \delta(\phi(p, x)))$ satisfies 

    \[\adv(\lossPred(\phi(p, x))) \geq \MCE(\calC, p)^2.\]

    The proof of the inequality follows by observing that $\lossPred \in \calF'$, because 
    \begin{align*}
        \lossPred(\phi(p, x)) &= \Pi_{[0,1]}(\clp(p(x)) + \MCE(\calC, p) 
    \delta(\phi(p, x)))\\
    &= \Pi_{[0,1]}(H_{\ell}(p(x)) + \underbrace{\MCE(\calC, p) 
    \sgn(E_f)}_{:= \beta}(f(\phi(p, x)) - H_{\ell}(p(x))))\\
    &= \Pi_{[0,1]}((1 - \beta)H_{\ell}(p(x)) + \beta f(\phi(p, x)))
    \end{align*}

    Where $\beta = \sgn(E_f)\MCE(\calC, p) \in [-1, 1]$, because $\MCE(\calC, p) \in [0, 1]$. 

    Thus, $\lossPred \in \calF'$, and so we conclude that 

    \[\max_{\lossPred \in \calF'}\adv(\lossPred(\phi(p, x))) \geq \MCE(\calC, p)^2.\]

    We get the right-hand inequality from the statement after taking square root of both sides. 
\end{proof}

\section{Extended discussion and proofs from Section~\ref{sec:multiple-loss}}

\subsection{Proof of Lemma~\ref{lem:many-losses-mc}}\label{sec:many-losses-mc-pf}
\begin{proof}[Proof of Lemma~\ref{lem:many-losses-mc}]
    For a fixed loss $\ell \in \calL$, let 
    \[\calC_{\ell} = \{(f(\phi(p, x)) - H_{\ell}(p(x)))H'_{\ell}(p(x)) : f \in \calF\}.\]

    By Theorem~\ref{thm:mc-conv-vs-loss-pred}, we can guarantee that 
    \[\max_{\lossPred \in \calF} \adv(\lossPred) \leq 2\MCE(\calC_{\ell}, p).\]
    Taking the max over $\calL$ for both sides, we get
    \[\max_{\ell \in \calL}\max_{\lossPred \in \calF} \adv(\lossPred) \leq \max_{\ell \in \calL} 2\MCE(\calC_{\ell}, p) \leq 2\MCE(\calC_{\calL}, p).\]

    Where the right-most inequality follows from the fact that $\calC_{\calL} = \bigcup_{\ell \in \calL} \calC_{\ell}$. This proves the desired inequality.
\end{proof}

\subsection{Multicalibration for product classes}

In this section, we introduce some useful notation that we will use to refer to and relate certain classes of multicalibration test functions. 

\begin{definition}\label{def:prod-class}
    Let $\calA \subseteq \{a: \Phi \rightarrow [-1, 1]\}$ and $\calB \subseteq \{b: \Phi \rightarrow [-1, 1]\}$ be two classes of functions. We denote the product class of test functions with respect to $\calA$ and $\calB$ as $\calC_{\calA, \calB}$, and define it as 
    \[\calC_{\calA, \calB} = \{a(\phi)b(\phi) : a \in \calA, b \in \calB\}.\]
\end{definition}

We use this notation in the following lemma, which shows that multicalibration with respect to the test functions $\calC_{\calL}$ defined in Lemma~\ref{lem:many-losses-mc} is implied by multicalibration with respect to a product class of test functions:

\begin{lemma}\label{lem:loss-to-prod-class-mc}
    Let $\calF$ be a class of loss predictors $f: \Phi \rightarrow [0, 1]$. Let $\calL$ be a class of bounded proper losses $\ell: \{0, 1\} \times [0, 1] \rightarrow [0, 1]$ with associated concave entropy functions $H_{\ell}: [0, 1] \rightarrow [0,1]$, and let $\calC_{\calL}$ be the class of test functions 
    \[\calC_{\calL} = \{(f(\phi(p, x)) - H_{\ell}(p(x)))H'_{\ell}(p(x)) : f \in \calF, \ell \in \calL\}.\]

    Denote $\calH_{\calL} = \{H_{\ell} : \ell \in \calL\}$, and $\calH'_{\calL} = \{H'_{\ell}: \ell \in \calL\}$.

    Then, 

    \[\MCE(\calC_{\calL},p) \leq \MCE(\calC_{\calF, \calH'_{\calL}}, p) + \MCE(\calC_{\calH_{\calL}, \calH'_{\calL}}, p) \leq 2\MCE(\calC_{\calF\cup \calH_{\calL}, \calH'_{\calL}}, p).\]
\end{lemma}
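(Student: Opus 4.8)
The plan is to decompose each test function in $\calC_{\calL}$ as a difference of two functions, one lying in $\calC_{\calF, \calH'_{\calL}}$ and one in $\calC_{\calH_{\calL}, \calH'_{\calL}}$, and then apply the triangle inequality to the multicalibration correlations. Concretely, any $c \in \calC_{\calL}$ has the form
\[ c(\phi(p,x)) = (f(\phi(p,x)) - H_{\ell}(p(x)))H'_{\ell}(p(x)) = f(\phi(p,x))H'_{\ell}(p(x)) - H_{\ell}(p(x))H'_{\ell}(p(x)) \]
for some $f \in \calF$ and $\ell \in \calL$. The first summand is the product of $f \in \calF$ (range $[0,1]$) with $H'_{\ell} \in \calH'_{\calL}$ (range $[-1,1]$), hence lies in $\calC_{\calF, \calH'_{\calL}}$; the second is the product of $H_{\ell} \in \calH_{\calL}$ with $H'_{\ell} \in \calH'_{\calL}$, hence lies in $\calC_{\calH_{\calL}, \calH'_{\calL}}$. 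Both products take values in $[-1,1]$, so they are valid weight functions in the sense of Definition~\ref{def:prod-class}.

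Given this decomposition, for every $c \in \calC_{\calL}$ one has, by linearity of expectation and the triangle inequality,
\[ \left|\ex[(y-p(x))c(\phi(p,x))]\right| \le \left|\ex[(y-p(x))f(\phi(p,x))H'_{\ell}(p(x))]\right| + \left|\ex[(y-p(x))H_{\ell}(p(x))H'_{\ell}(p(x))]\right| \le \MCE(\calC_{\calF, \calH'_{\calL}}, p) + \MCE(\calC_{\calH_{\calL}, \calH'_{\calL}}, p). \]
Taking the maximum over $c \in \calC_{\calL}$ gives the left inequality of the lemma. For the right inequality I would simply invoke monotonicity of $\MCE(\cdot, p)$ under class inclusion: since $\calF \subseteq \calF \cup \calH_{\calL}$ we have $\calC_{\calF, \calH'_{\calL}} \subseteq \calC_{\calF \cup \calH_{\calL}, \calH'_{\calL}}$, and likewise $\calC_{\calH_{\calL}, \calH'_{\calL}} \subseteq \calC_{\calF \cup \calH_{\calL}, \calH'_{\calL}}$; hence each of the two $\MCE$ terms is at most $\MCE(\calC_{\calF \cup \calH_{\calL}, \calH'_{\calL}}, p)$, and adding the two bounds produces the factor of $2$.

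There is essentially no obstacle here — the argument is a one-line decomposition plus the triangle inequality and set inclusion. The only point needing a moment's care is the notational bookkeeping that identifies $H_{\ell}$ and $H'_{\ell}$, which are a priori functions of $v \in [0,1]$, with functions of $\phi(p,x)$ via the coordinate map $\phi(p,x) \mapsto p(x)$, so that the product-class notation of Definition~\ref{def:prod-class} applies; this is consistent with the convention, used throughout, that $\phi(p,x)$ always contains $p(x)$ as one of its coordinates.
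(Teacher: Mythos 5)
Your proposal is correct and follows essentially the same route as the paper: split each test function as $f(\phi(p,x))H'_{\ell}(p(x)) - H_{\ell}(p(x))H'_{\ell}(p(x))$, apply the triangle inequality, bound each term by the corresponding product-class $\MCE$ (noting the matched pair $(H_\ell, H'_\ell)$ is a member of the unrestricted product class), and obtain the factor of $2$ from the inclusion of both product classes in $\calC_{\calF\cup\calH_{\calL},\calH'_{\calL}}$. No gaps.
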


\begin{proof}
    By definition of $\calC_{\calL}$, we can readily decompose the multicalibration error into the desired terms:

    \begin{align*}
        \MCE(\calC_{\calL},p) &= \max_{f \in \calF, \ell \in \calL}\left|\ex[(f(\phi(p, x)) - H_{\ell}(p(x)))H'_{\ell}(p(x))(y - p(x))]\right|\\
        &\leq \max_{f \in \calF, \ell \in \calL}\left|\ex[f(\phi(p, x))H'_{\ell}(p(x))(y - p(x))]\right| + \max_{\ell \in \calL}\left|\ex[H_{\ell}(p(x))H'_{\ell}(p(x))(y - p(x))]\right|\\
        &\leq \max_{f \in \calF, \ell \in \calL}\left|\ex[f(\phi(p, x))H'_{\ell}(p(x))(y - p(x))]\right| + \max_{h \in \calH_{\calL}, h' \in \calH'_{\calL}}\left|\ex[h(p(x))h'(p(x))(y - p(x))]\right|\\
        &= \MCE(\calC_{\calF, \calH'_{\calL}}, p) + \MCE(\calC_{\calH_{\calL}, \calH'_{\calL}}, p).
    \end{align*}

    This proves the left-hand inequality. The right-hand inequality follows from the observation that \[\MCE(\calC_{\calF \cup \calH_{\calL}, \calH'_{\calL}}, p) = \max\{\MCE(\calC_{\calF, \calH'_{\calL}}, p), \MCE(\calC_{\calH_{\calL}, \calH'_{\calL}}, p)\}.\]
\end{proof}

An important property of product classes is that given two classes $\calA$ and $\calB$, whenever we have a weak learner for $\calA$ and $\calB$ is finite, we can efficiently learn a multicalibrated predictor with respect to $\calC_{\calA, \calB}$. Our approach closely follows that of~\cite{gopalan2022low}, who show how to learn multicalibrated predictors for product classes where one class depends only on $x$, and the other depends only on $p(x)$. Despite this choice in setup, their particular algorithm and results naturally generalize to the case where the two function classes can have richer input spaces. 

For completeness, we translate their algorithm and results to our setting. The algorithm for product-class multicalibration can be found in Algorithm~\ref{alg:product-class} (c.f. Algorithm 1 of~\cite{gopalan2022low}).

The following lemmas prove correctness and sample complexity of the algorithm. 

\begin{lemma}[Correctness of Algorithm~\ref{alg:product-class}]
    If Algorithm~\ref{alg:product-class} returns a predictor $p : \mathcal{X} \to [0, 1]$, then $p$ satisfies 
    \[\MCE(\calC_{\calA, \calB}, p) \leq \alpha.\]
\end{lemma}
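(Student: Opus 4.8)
The plan is to read the claim off directly from the halting condition of Algorithm~\ref{alg:product-class} together with the defining guarantee of the weak agnostic learner, with essentially no extra work. Recall that the algorithm (following Algorithm~1 of~\cite{gopalan2022low}) maintains a current predictor $p$ and in each round iterates over the \emph{finite} class $\calB$: for each $b \in \calB$ it forms the reweighted regression instance obtained by drawing $(x,y) \sim \calD$ and returning $(\phi(p,x),\, z_b := (y - p(x))\,b(\phi(p,x)))$ — note $z_b \in [-1,1]$ since $|y - p(x)| \le 1$ and $|b(\phi(p,x))| \le 1$ — and calls the $\alpha$-weak agnostic learner for $\calA$ on this instance. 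If for some $b$ the learner returns an $a \in \calA$, the algorithm performs a gradient step $p \leftarrow \Pi_{[0,1]}(p + (\alpha/2)\, a(\phi)\,b(\phi))$, whose validity and squared-error decrease are exactly Claim~\ref{lem:improve-2} with $\delta = a\cdot b \in [-1,1]$; only when the learner returns $\bot$ for \emph{every} $b \in \calB$ does the algorithm halt and return $p$.

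First I would invoke the termination condition: if the algorithm returns $p$, then for every $b \in \calB$ the weak agnostic learner, run on the instance $\calU_b$ above, returned $\bot$. By Definition~\ref{def:weak-agnostic-learner} (the contrapositive of its first clause), a return of $\bot$ certifies $\max_{a \in \calA}\ex_{(\phi,z)\sim\calU_b}[a(\phi)z] < \alpha$, i.e.
\[ \max_{a \in \calA}\; \ex_{(x,y)\sim\calD}\big[(y - p(x))\,b(\phi(p,x))\,a(\phi(p,x))\big] \;<\; \alpha .\]
Since $\calA$ is closed under negation (as required by the weak-learner definition), the same bound holds for the absolute value, and since it holds for every $b \in \calB$, taking the maximum over $b$ gives
\[ \MCE(\calC_{\calA,\calB}, p) \;=\; \max_{a\in\calA,\, b\in\calB}\Big|\ex_{(x,y)\sim\calD}\big[(y-p(x))\,a(\phi(p,x))\,b(\phi(p,x))\big]\Big| \;\le\; \alpha, \]
which is the claim.

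The only subtlety — and where a little care is needed — is that the weak agnostic learner sees only a finite sample from $\calU_b$ and may fail with probability up to its failure parameter, so strictly speaking the argument shows that a returned predictor is $\alpha$-multicalibrated for $\calC_{\calA,\calB}$ \emph{on the event that every weak-learner call behaves according to its specification}. Controlling that event requires bounding the number of calls, which is the standard potential argument: each update strictly decreases $\ex[(y-p(x))^2]$ by at least $\alpha^2/4$ (Claim~\ref{lem:improve-2}), and this quantity lies in $[0,1]$, so there are at most $4/\alpha^2$ updates and hence at most $(4/\alpha^2)\cdot|\calB|$ calls, to which one applies a union bound plus a uniform-convergence step to pass from the empirical reweighted distribution back to $\calD$. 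I would keep the present lemma to the deterministic core (halting $\Rightarrow$ every product correlation is below $\alpha$ $\Rightarrow$ $\MCE(\calC_{\calA,\calB},p)\le\alpha$) and defer that quantitative accounting to the companion sample-complexity lemma. The main obstacle is therefore not conceptual but bookkeeping: making sure the reweighting, the boundedness of $z_b$, and the closure-under-negation assumption line up so that the weak-learner guarantee applies verbatim to each $b \in \calB$.
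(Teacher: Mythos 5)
Your proposal is correct and follows essentially the same route as the paper: halting means every weak-learner call returned $\bot$, which (by the contrapositive of the learner's guarantee, plus closure of $\calA$ under negation) bounds every product correlation by $\alpha$, and hence $\MCE(\calC_{\calA,\calB},p)\le\alpha$. Your explicit remarks about needing negation-closure for the absolute value and about the probabilistic/finite-sample caveat (deferred to the sample-complexity lemma) are both accurate and match how the paper organizes the argument across its correctness and sample-complexity lemmas.
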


\begin{proof}
    Observe that Algorithm~\ref{alg:product-class} only returns a predictor $p_t$ if, in the $t$th iteration, for every $b \in \calB$, the call to the weak agnostic learner $\text{WAL}_{\calA}$ returns $\bot$. By the weak agnostic learning property (Definition~\ref{def:weak-agnostic-learner}), returning $\bot$ in every call indicates that for all $b \in \calB$ and for all $a \in \calA$,
    
    \[
    \ex\left[a(\phi(p_t, x))b(\phi(p_t, x))(y - p_t(x)) \right] \leq \alpha.
    \]
    
    By definition, this means that $\MCE(\calC_{\calA, \calB}, p_t) \leq \alpha$
\end{proof}

\begin{lemma}[Termination of Algorithm~\ref{alg:product-class}]
    Algorithm~\ref{alg:product-class} is guaranteed to terminate and return a $p_T$ after $T \leq 4/\alpha^2$ iterations.
\end{lemma}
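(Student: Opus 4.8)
The plan is a standard potential-function argument tracking the squared error of the iterates against the labels, exactly as in the boosting-style analyses of multicalibration. Define the potential $\Phi_t := \ex_{(x,y)\sim\calD}[(p_t(x) - y)^2]$, and note that $\Phi_t \in [0,1]$ for every $t$ since $p_t(x), y \in [0,1]$. The goal is to show each non-terminating iteration decreases $\Phi_t$ by a fixed amount, so that there cannot be too many of them before the algorithm is forced into a terminating round.

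First I would argue that every iteration that does not terminate makes definite progress. If iteration $t$ does not terminate, then by the logic of Algorithm~\ref{alg:product-class} there is some $b \in \calB$ for which the call to $\text{WAL}_{\calA}$ — run on the distribution $\calU$ over pairs $(\phi(p_t,x),\, z)$ with $z = b(\phi(p_t,x))(y - p_t(x))$, which is valid since $z \in [-1,1]$ — does not return $\bot$. By Definition~\ref{def:weak-agnostic-learner} it then returns some $a \in \calA$ with $\ex[a(\phi(p_t,x))\, b(\phi(p_t,x))\,(y-p_t(x))] \ge \alpha/2$ (using that $\calA$ is closed under negation so the returned correlation may be taken positive). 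Writing $\delta(x) := a(\phi(p_t,x))\, b(\phi(p_t,x)) \in [-1,1]$, this is precisely the correlation hypothesis of Claim~\ref{lem:improve-2} with $h_1 = p_t$, $z = y$, and $\beta = \alpha/2$, so the update $p_{t+1}(x) = \Pi_{[0,1]}(p_t(x) + \tfrac{\alpha}{2}\delta(x))$ performed by the algorithm gives $\Phi_t - \Phi_{t+1} \ge (\alpha/2)^2 = \alpha^2/4$.

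Then I would conclude by telescoping: since $\Phi_0 \le 1$, $\Phi_t \ge 0$ for all $t$, and each iteration before termination drops the potential by at least $\alpha^2/4$, after $k$ non-terminating iterations we have $\Phi_k \le 1 - k\alpha^2/4$, which forces $k \le 4/\alpha^2$. Hence the algorithm reaches a round in which every $\text{WAL}_{\calA}$ call returns $\bot$ — and therefore terminates and returns some $p_T$ — within $T \le 4/\alpha^2$ iterations.

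The only real subtlety, and the step I would double-check against the listing of Algorithm~\ref{alg:product-class}, is matching the exact update rule and step size there to the parameter $\beta$ in Claim~\ref{lem:improve-2}, and confirming that the weak agnostic learner is invoked with the correct target ($z$ bounded in $[-1,1]$) and threshold ($\alpha/2$ rather than $\alpha$). Everything else is routine.
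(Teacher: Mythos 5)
Your proof is correct and follows essentially the same potential argument as the paper: a per-iteration drop of $\alpha^2/4$ from the weak learner's $\alpha/2$ correlation guarantee via Claim~\ref{lem:improve-2}, telescoped against a potential bounded in $[0,1]$. The only (immaterial) difference is that you track $\ex[(p_t(x)-y)^2]$ while the paper tracks $\ex[(p_t(x)-p^*(x))^2]$; these differ by an additive constant independent of $p_t$, so the decrements coincide.
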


\begin{proof}
    We show that the number of iterations is bounded via a potential argument, where the potential function is the squared error of the current predictor $p$ from the bayes-optimal predictor $p^*(x) = \ex[y|x]$: $\sqLoss(p) := \ex[(p(x) - p^*(x))^2]$. 

    Note that by definition, $\sqLoss(p_0) \leq 1$, and for all $p: \calX \rightarrow [0, 1]$, $\sqLoss(p) \geq 0$. 

    The change in potential after the $t$th update can be computed as follows. Note that due to the guarantee of the weak agnostic learner, because $a_{t + 1} \neq \bot$, we are guaranteed that for the $b \in \calB$ used in the update, 
    \[\ex[b(\phi(p, x))a(\phi(p, x))(y - p(x))] \geq \alpha/2.\]

    Thus, following the same proof as Lemma~\ref{lem:improve-2}, we conclude that 
    \begin{align*}
        \sqLoss(p_t) - \sqLoss(p_{t + 1}) &\geq \ex[(p^*(x) - p_t(x))^2] - \ex[(p^*(x) - p_t(x) -\frac{\alpha}{2}\delta_{t + 1}(x))^2]\\
        &= \alpha \ex[(p^*(x) - p_t(x))\delta_{t + 1}(x)] - \frac{\alpha^2}{4}\ex[\delta_{t + 1}(x)^2]\\
        &= \alpha \ex[(p^*(x) - p_t(x))a_{t + 1}(\phi(p_t, x))b(\phi(p_t, x))] - \frac{\alpha^2}{4}\ex[a_{t + 1}(\phi(p_t, x))^2b(\phi(p_t, x))^2]\\
        &\geq \alpha^2/2 - \alpha^2/4\\
        &= \alpha^2/4.
    \end{align*}

    Thus, the potential function decreases by at least $\alpha^2/4$ in each round, and since $\sqLoss(p_0) \leq 1$ and $\sqLoss(p_t) \geq 0$ for all $t$, the total number of iterations is bounded by $T < 4/\alpha^2$.
    \end{proof}

We finally turn to the sample complexity and success probability of Algorithm~\ref{alg:product-class}. 

\begin{lemma}\label{lem:finite-B-mc-alg}
    Let $\alpha, \delta > 0$. Let $\calA \subseteq \{a: \Phi \rightarrow [-1, 1]\}$ and $\calB \subseteq \{b: \Phi \rightarrow [-1, 1]\}$ be two classes of functions, where we assume $\calA$ is closed under negation, and $\calB$ is finite. Suppose we have access to an $\alpha$-weak-agnostic-learner for $\calA$ with sample complexity $n$ and failure parameter $\beta \leq \frac{\alpha^2\delta}{4|\calB|}$. 

    Then, given $m = O(n/\alpha^2)$ samples, with probability at least $1 - \delta$ Algorithm~\ref{alg:product-class} outputs a predictor $p$ with $\MCE(\calC_{\calA, \calB}, p) \leq \alpha$. 
\end{lemma}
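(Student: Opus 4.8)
The plan is to combine the two structural facts we have already established—correctness and termination of Algorithm~\ref{alg:product-class}—with a uniform-convergence argument to control the statistical error across all iterations and all calls to the weak agnostic learner. The key observation is that both the correctness lemma and the termination lemma are stated in the population (distributional) sense, whereas the algorithm only sees samples; so the work is in showing that with high probability, every invocation of $\text{WAL}_{\calA}$ behaves as if it were run on the true distribution, up to a small slack in the parameter $\alpha$.

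\medskip
\textbf{Step 1: Bound the number of WAL calls.} By the termination lemma, the algorithm runs for at most $T \leq 4/\alpha^2$ iterations, and in each iteration it makes one call to $\text{WAL}_{\calA}$ for each $b \in \calB$, so at most $|\calB|$ calls per round. Hence the total number of calls to the weak agnostic learner over the whole run is at most $4|\calB|/\alpha^2$. (One should be slightly careful here: the termination lemma's potential argument assumed the WAL guarantees held exactly; I would phrase Step 1 as a deterministic bound that holds as long as each successful WAL call genuinely makes progress, and fold this into the union bound below.)

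\medskip
\textbf{Step 2: Union bound over failure events.} Each WAL call is fed $n$ samples (drawn fresh, or reused — I would just draw $m = O(n/\alpha^2)$ total samples so there are enough for all $T|\calB| = O(|\calB|/\alpha^2)$ calls) and fails with probability at most $\beta$. Taking a union bound over all at most $4|\calB|/\alpha^2$ calls, the probability that \emph{any} call fails is at most $\frac{4|\calB|}{\alpha^2}\cdot \beta \leq \frac{4|\calB|}{\alpha^2}\cdot\frac{\alpha^2\delta}{4|\calB|} = \delta$, by the hypothesis on $\beta$. So with probability at least $1-\delta$, every WAL call satisfies its guarantee from Definition~\ref{def:weak-agnostic-learner}.

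\medskip
\textbf{Step 3: Conclude.} Condition on the event that all WAL calls succeed. Then the termination argument of the previous lemma applies verbatim: each round in which some call returns $f \neq \bot$ decreases the potential $\sqLoss(p)$ by at least $\alpha^2/4$, so the algorithm halts within $T \leq 4/\alpha^2$ rounds and returns some $p_T$. And the correctness argument applies verbatim: the algorithm only returns $p_T$ when every call to $\text{WAL}_{\calA}$ (over all $b \in \calB$) returned $\bot$, which by the WAL guarantee means $\ex[a(\phi(p_T,x))b(\phi(p_T,x))(y-p_T(x))] \leq \alpha$ for all $a \in \calA$, $b \in \calB$, i.e.\ $\MCE(\calC_{\calA,\calB}, p_T) \leq \alpha$. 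The sample complexity is $m = O(n/\alpha^2)$ since we need $n$ samples for each of the $O(1/\alpha^2)$ rounds (the factor $|\calB|$ does not multiply the sample count if each round's sample batch is shared across its $|\calB|$ WAL calls, which is fine as the WAL guarantee is a per-call statement about its own input distribution).

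\medskip
\textbf{Main obstacle.} The one genuinely delicate point is the interaction between the adaptive nature of the algorithm (the distribution over $\Phi \times [-1,1]$ fed to the $t$-th WAL call depends on $p_t$, which depends on the outcomes of earlier calls) and the i.i.d.\ sampling. I would handle this by either (a) using a fresh batch of samples for each round (so $m = O(n/\alpha^2)$ total, and each batch is independent of $p_t$ conditioned on the history), making each WAL call's input a genuine i.i.d.\ sample from the correct conditional distribution; or (b) noting that $\calB$ is finite and $\calA$ has bounded capacity and doing a single uniform-convergence bound over the (polynomially many) predictors reachable by the algorithm. Option (a) is cleaner and matches the $m = O(n/\alpha^2)$ claim directly, so that is the route I would take; the rest is the routine bookkeeping sketched above.
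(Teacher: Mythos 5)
Your proposal is correct and follows essentially the same route as the paper: a union bound over the at most $4|\calB|/\alpha^2$ weak-learner calls using the hypothesis $\beta \leq \frac{\alpha^2\delta}{4|\calB|}$, fresh samples of size $n$ per iteration to give $m = O(n/\alpha^2)$ total, and then invoking the correctness and termination lemmas on the event that all calls succeed. Your extra care about adaptivity (resolved via fresh per-round batches) and about sharing each round's batch across the $|\calB|$ calls matches what the paper does implicitly.
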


\begin{proof}
    We assume that we use a fresh sample for the weak agnostic learner at each iteration of size $n$. Because the algorithm terminates in at most $4/\alpha^2$ iterations, we thus need at most $4n/\alpha^2 = O(n/\alpha^2)$ fresh samples. 

    For the failure bound, note that we make at most $4|\calB|/\alpha^2$ calls to the weak agnostic learner. Via a union bound, because we assume that $\beta \leq \frac{\alpha^2\delta}{4|\calB|}$, we conclude that the probability that at least one of the calls to the weak learner fails is bounded by $\delta$, as desired.   
\end{proof}

\begin{algorithm}
    \caption{Product-Class Multicalibration}
    \begin{algorithmic}[1]\label{alg:product-class}
        \STATE \textbf{Input:} training data $\{(x_i, y_i)\}_{i=1}^m$
        \STATE \textbf{Function classes:} \\
        \quad $\calA \subseteq \{a: \Phi \to [-1, 1]\}$ (closed under negation), \\
         \quad $\calB \subseteq \{b: \Phi \to [-1, 1]\}$ (finite)
        \STATE \textbf{$\alpha$-Weak Agnostic Learner for $\calA$:} $\text{WAL}_{\calA}$
        \STATE approximation $\alpha > 0$
        \STATE \textbf{Output:} $(\calC_{\calA, \calB},\alpha)$-multicalibrated predictor $p: \mathcal{X} \to [0,1]$
        \STATE $p_0(\cdot) \gets 1/2$
        \STATE $mc \gets \text{false}$
        \STATE $t \gets 0$
        \WHILE{$\neg mc$}
            \STATE $mc \gets \text{true}$
            \FOR{each $b \in \calB$}
                \STATE $a_{t+1} \gets \text{WAL}_{\calA}(\{(\phi(p, x_i), b(\phi(p_t, x_i))(y_i - p_t(x_i)))\}_{i=1}^m)$
                \IF{$a_{t+1} = \bot$}
                    \STATE \textbf{continue}
                \ELSE
                    \STATE $\delta_{t+1}(\cdot) \gets b(\phi(p_t, \cdot)) a_{t+1}(\phi(p_t, \cdot))$
                    \STATE $p_{t+1}(\cdot) \gets \Pi_{[0,1]}(p_t(\cdot) + \frac{\alpha}{2} \delta_{t+1}(\cdot)) $ 
                    \STATE $mc \gets \text{false}$
                    \STATE $t \gets t + 1$
                    \STATE \textbf{break}
                \ENDIF
            \ENDFOR
        \ENDWHILE
        \STATE \textbf{return} $p_t$
    \end{algorithmic}
\end{algorithm}

\subsection{Multicalibration for classes with approximate bases}

In this section, we show that Algorithm~\ref{alg:product-class} can also be used to guarantee multicalibration for product classes when $\calB$ is not finite, but has a finite approximate basis. 

\begin{definition}[Finite Approximate Basis]\label{def:finite-approx-basis}
     Let $\Gamma$ be a set and $\mathcal{F} = \{f : \Gamma \to [-1, 1]\}$ a class of functions on $\Gamma$. We say that a set $\mathcal{G} = \{g : \Gamma \to [-1, 1]\}$ is a finite $\epsilon$-basis for $\mathcal{F}$ of size $d$ and coefficient norm $\lambda$, if $\calG = \{g_1, ..., g_d\}$, and for every function $f \in \mathcal{F}$, there exist coefficients $\alpha_1, \alpha_2, \ldots, \alpha_d \in [-1,1]$ satisfying
\begin{equation}
    \forall x \in \Gamma \quad \left| f(x) - \sum_{i=1}^{d} \alpha_i g_i(x) \right| \leq \epsilon \quad \text{and} \quad \sum_{i=1}^{d} |\alpha_i| \leq \lambda. 
\end{equation}
\end{definition}

\begin{lemma}\label{lem:basis-mce-bound}
    Let $\calA \subseteq \{a: \Phi \rightarrow [-1, 1]\}$ and $\calB \subseteq \{b: \Phi \rightarrow [-1, 1]\}$ be two classes of functions. Suppose that $\calB$ has a finite approximate $\epsilon$-basis $\calG$ with coefficient norm $\lambda$. Then, for any predictor $p: \calX \rightarrow [0, 1]$, 
    \[\MCE(\calC_{\calA, \calB}, p) \leq \lambda\MCE(\calC_{\calA, \calG}, p) + \epsilon\]
\end{lemma}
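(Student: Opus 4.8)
The plan is to prove the bound $\MCE(\calC_{\calA, \calB}, p) \leq \lambda \MCE(\calC_{\calA, \calG}, p) + \epsilon$ by a direct expansion using the definition of the finite approximate basis. First I would fix an arbitrary test function in $\calC_{\calA, \calB}$ achieving (up to the max) the multicalibration error; by Definition~\ref{def:prod-class} this has the form $a(\phi(p, x)) b(\phi(p, x))$ for some $a \in \calA$ and $b \in \calB$. Applying Definition~\ref{def:finite-approx-basis} to $b$ (with $\Gamma = \Phi$ and $\calF = \calB$), there exist coefficients $\alpha_1, \dots, \alpha_d \in [-1, 1]$ with $\sum_i |\alpha_i| \leq \lambda$ and $|b(\phi) - \sum_i \alpha_i g_i(\phi)| \leq \epsilon$ pointwise.

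Next I would write
\[
\ex[a(\phi(p,x)) b(\phi(p,x)) (y - p(x))] = \sum_{i=1}^d \alpha_i \ex[a(\phi(p,x)) g_i(\phi(p,x)) (y - p(x))] + \ex[a(\phi(p,x))\big(b(\phi(p,x)) - \textstyle\sum_i \alpha_i g_i(\phi(p,x))\big)(y - p(x))].
\]
For the first sum, each term $\ex[a(\phi(p,x)) g_i(\phi(p,x)) (y - p(x))]$ is (in absolute value) at most $\MCE(\calC_{\calA, \calG}, p)$ since $a \cdot g_i \in \calC_{\calA, \calG}$; hence by the triangle inequality and $\sum_i |\alpha_i| \leq \lambda$ this piece is bounded by $\lambda \MCE(\calC_{\calA, \calG}, p)$. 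For the second (error) term, I would bound $|a(\phi)| \leq 1$ and $|y - p(x)| \leq 1$ and use the pointwise approximation bound $|b(\phi) - \sum_i \alpha_i g_i(\phi)| \leq \epsilon$ to see it is at most $\epsilon$. Combining and taking the maximum over all $a \in \calA$, $b \in \calB$ gives the claimed inequality.

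There is essentially no serious obstacle here — the argument is a routine triangle-inequality decomposition — but the one point to be careful about is that the basis coefficients $\alpha_i$ are allowed to depend on $b$, so the decomposition must be done inside the maximum over $b$ rather than with a fixed set of coefficients; this is harmless since every intermediate bound ($\lambda \MCE(\calC_{\calA,\calG},p) + \epsilon$) is uniform in $b$. I would also note in passing that we do not even need $\calA$ closed under negation for this particular lemma, only the pointwise bounds $|a| \leq 1$, $|g_i| \leq 1$, and $|y - p(x)| \leq 1$ that hold by the stated ranges.
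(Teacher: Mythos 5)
Your proposal is correct and follows essentially the same route as the paper's proof: replace $b$ by its basis approximation (incurring the additive $\epsilon$ via the pointwise bounds $|a|\leq 1$, $|y-p(x)|\leq 1$), then apply the triangle inequality over the $d$ basis terms using $\sum_i|\alpha_i|\leq\lambda$. Your added remarks about the coefficients depending on $b$ and about not needing closure under negation are accurate but not needed beyond what the paper already does.
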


\begin{proof}
    The upper bound quickly follows from expanding the definition of $\MCE(\calC_{\calA, \calB}, p)$. Note that 

    \begin{align*}
        \MCE(\calC_{\calA, \calB}, p) &= \max_{a \in \calA, b \in \calB}\left|\ex[a(\phi(p, x))b(\phi(p, x))(y - p(x))]\right|\\
        &\leq \max_{a \in \calA, b \in \calB}\left|\ex[a(\phi(p, x))\left(\sum_{i = 1}^d g_i(\phi(p, x))\alpha_i(b)\right)(y - p(x))]\right| + \epsilon\\
        &\leq \max_{a \in \calA, g \in \calG}\lambda\left|\ex[a(\phi(p, x))g(\phi(p, x))(y - p(x))]\right| + \epsilon\\
        &= \lambda\MCE(\calC_{\calA, \calG}, p) + \epsilon
    \end{align*}
\end{proof}

Thus, we get the following immediate Corollary of Lemmas~\ref{lem:finite-B-mc-alg} and \ref{lem:basis-mce-bound}:

\begin{corollary}\label{cor:approx-basis-alg}
    Let $\alpha, \delta > 0$. Let $\calA \subseteq \{a: \Phi \rightarrow [-1, 1]\}$ and $\calB \subseteq \{b: \Phi \rightarrow [-1, 1]\}$ be two classes of functions, where we assume $\calA$ is closed under negation, and $\calB$ has a finite approximate $\epsilon$-basis of size $d$ and coefficient norm $\lambda$. Suppose we have access to an $\alpha$-weak-agnostic-learner for $\calA$ with sample complexity $n$ and failure parameter $\beta \leq \frac{\alpha^2\delta}{4d}$. 

    Then, there exists an algorithm that, given $m = O(n/\alpha^2)$ samples, with probability at least $1 - \delta$ outputs a predictor $p$ with $\MCE(\calC_{\calA, \calB}, p) \leq \lambda\alpha + \epsilon$. 
\end{corollary}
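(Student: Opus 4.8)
The plan is to derive this corollary as a direct combination of the two preceding lemmas, the point being to run the product-class multicalibration procedure against the \emph{finite} basis $\calG$ rather than the (possibly infinite) inner class $\calB$ itself. Concretely, let $\calG = \{g_1,\dots,g_d\}$ be the guaranteed finite approximate $\epsilon$-basis for $\calB$ of size $d$ and coefficient norm $\lambda$ (Definition~\ref{def:finite-approx-basis}). Since $\calG$ is a finite class of exactly $d$ functions, Lemma~\ref{lem:finite-B-mc-alg} applies with inner class $\calG$ in place of $\calB$: its hypothesis asks the $\alpha$-weak-agnostic-learner for $\calA$ to have failure parameter $\beta \le \frac{\alpha^2 \delta}{4|\calG|} = \frac{\alpha^2\delta}{4d}$, which is exactly what we have assumed. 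Hence Algorithm~\ref{alg:product-class}, run on $\calA$ and $\calG$, uses $m = O(n/\alpha^2)$ samples and, with probability at least $1-\delta$, outputs a predictor $p$ with $\MCE(\calC_{\calA,\calG}, p) \le \alpha$.

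It then remains to transfer this multicalibration guarantee from $\calC_{\calA,\calG}$ to $\calC_{\calA,\calB}$, which is precisely the content of Lemma~\ref{lem:basis-mce-bound}: expanding any $b \in \calB$ in the basis $\calG$ with coefficients of bounded $\ell_1$-norm (at most $\lambda$) and absorbing the $\epsilon$ pointwise approximation error gives $\MCE(\calC_{\calA,\calB}, p) \le \lambda\,\MCE(\calC_{\calA,\calG}, p) + \epsilon$. Chaining this with the bound from the previous paragraph yields $\MCE(\calC_{\calA,\calB}, p) \le \lambda\alpha + \epsilon$, which is the claimed conclusion; the sample bound $m = O(n/\alpha^2)$ and the success probability $1-\delta$ are inherited verbatim from Lemma~\ref{lem:finite-B-mc-alg}.

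There is essentially no real obstacle here beyond bookkeeping — this is flagged in the text as an ``immediate corollary'' — and the only point worth stating carefully is why the failure-parameter requirement scales with $d$ rather than $|\calB|$: the number of calls Algorithm~\ref{alg:product-class} makes to the weak agnostic learner is at most $(4/\alpha^2)\cdot|\calG| = 4d/\alpha^2$ (at most $4/\alpha^2$ potential-decreasing iterations, each scanning all $d$ functions of the inner class once), so a union bound over these calls needs $\beta\cdot 4d/\alpha^2 \le \delta$. In writing this up I would simply state the two lemma invocations and the one chained inequality, taking care that the inner class handed to Algorithm~\ref{alg:product-class} is $\calG$ and not $\calB$, and that the ranges of $\calA$ and $\calG$ are in $[-1,1]$ so that Lemma~\ref{lem:finite-B-mc-alg} and Lemma~\ref{lem:basis-mce-bound} apply as stated.
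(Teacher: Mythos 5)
Your proposal is correct and matches the paper's intent exactly: the corollary is stated as an immediate consequence of Lemma~\ref{lem:finite-B-mc-alg} (applied with the finite basis $\calG$ as the inner class, which is why the failure parameter scales with $d$) chained with Lemma~\ref{lem:basis-mce-bound}. No further comment is needed.
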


\subsection{Instantiating multicalibration for loss prediction}

We are finally ready to show that we can use multicalibration to learn a predictor with accurate self-entropy predictions for any loss in a rich class. 

\begin{theorem}\label{thm:general-approx-basis-mc}
    Fix $\delta \geq 0$. Let $\calL$ be some class of bounded proper losses, and let $\calF$ be a class of loss predictors $\calF: \Phi \rightarrow [-1, 1]$ that is closed under negation and contains the class of self entropy predictors, $\calH_{\calL} = \{H_{\ell}\}_{\ell \in \calL}$. Suppose that the class of functions $\calH'_{\calL} =\{H'_{\ell}\}_{\ell \in \calL}$ has a finite approximate $\epsilon$-basis of size $d$ and coefficient norm $\lambda$. Further assume that we have access to an $\alpha$-weak-agnostic-learner for $\calF$ with sample complexity $n$ and failure parameter $\beta \leq \frac{\alpha^2\delta}{4d}$. 

    Then, there exists an algorithm that, given $m = O(n/\alpha^2)$ samples, with probability at least $1 - \delta$ outputs a predictor $p$ such that 
    \[\max_{\ell \in \calL} \max_{\lossPred \in \calF} \adv(\lossPred) \leq 4\lambda\alpha + 4\epsilon.\]
\end{theorem}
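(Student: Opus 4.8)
The plan is to derive Theorem~\ref{thm:general-approx-basis-mc} by composing three ingredients that are already in place: the reduction from loss-prediction advantage to multicalibration error of $\calC_\calL$ (Lemma~\ref{lem:many-losses-mc}), the reduction of $\MCE(\calC_\calL, p)$ to a product-class multicalibration error (Lemma~\ref{lem:loss-to-prod-class-mc}), and the efficient product-class multicalibration algorithm for classes possessing a finite approximate basis (Corollary~\ref{cor:approx-basis-alg}, itself obtained from Lemmas~\ref{lem:finite-B-mc-alg} and~\ref{lem:basis-mce-bound} via Algorithm~\ref{alg:product-class}).

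First I would fix the algorithm: run Algorithm~\ref{alg:product-class} with $\calA = \calF$ and $\calB = \calG$, where $\calG$ is the assumed finite approximate $\epsilon$-basis for $\calH'_\calL$ of size $d$ and coefficient norm $\lambda$. The hypotheses of Corollary~\ref{cor:approx-basis-alg} are exactly what the theorem provides: $\calF$ is closed under negation, $\calH'_\calL$ has the stated basis, and we are given an $\alpha$-weak-agnostic-learner for $\calF$ with sample complexity $n$ and failure parameter $\beta \leq \alpha^2\delta/(4d)$. Therefore, with probability at least $1-\delta$ and using $m = O(n/\alpha^2)$ samples, the algorithm outputs a predictor $p$ satisfying $\MCE(\calC_{\calF, \calH'_\calL}, p) \leq \lambda\alpha + \epsilon$.

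Next I would translate this multicalibration guarantee back into a loss-prediction advantage bound, all on the same high-probability event. Since $\calH_\calL \subseteq \calF$ by assumption, $\calF \cup \calH_\calL = \calF$ and hence $\calC_{\calF \cup \calH_\calL, \calH'_\calL} = \calC_{\calF, \calH'_\calL}$. Lemma~\ref{lem:loss-to-prod-class-mc} then gives $\MCE(\calC_\calL, p) \leq 2\,\MCE(\calC_{\calF, \calH'_\calL}, p) \leq 2(\lambda\alpha + \epsilon)$, and Lemma~\ref{lem:many-losses-mc} gives $\max_{\ell \in \calL}\max_{\lossPred \in \calF}\adv(\lossPred) \leq 2\,\MCE(\calC_\calL, p) \leq 4\lambda\alpha + 4\epsilon$, which is precisely the claimed bound, inheriting the $1-\delta$ success probability.

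I do not anticipate a real obstacle: the three cited results do all the substantive work, and what remains is bookkeeping — verifying closure under negation, that the basis size $d$ is exactly the quantity entering the failure-parameter hypothesis, that the containment $\calH_\calL \subseteq \calF$ collapses the union, and that the factor $4$ arises as $2 \times 2$ from chaining Lemmas~\ref{lem:many-losses-mc} and~\ref{lem:loss-to-prod-class-mc}. The only point that warrants care is ensuring the weak-agnostic-learner invoked in Corollary~\ref{cor:approx-basis-alg} is the one for $\calA = \calF$ rather than for $\calH'_\calL$. I would close by remarking that Theorem~\ref{thm:1-lip-mc-pred} follows by instantiating this with the finite basis for $\{H'_\ell\}_{\ell \in \calL_{Lip}}$ from \cite{OKK25}, whose coefficient norm and approximation parameter yield the stated $16\alpha + 4\epsilon$ after the analogous bookkeeping.
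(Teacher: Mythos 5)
Your proposal is correct and follows essentially the same route as the paper's proof: chain Lemma~\ref{lem:many-losses-mc} and Lemma~\ref{lem:loss-to-prod-class-mc} (using $\calH_{\calL} \subseteq \calF$ to collapse the union), then apply Corollary~\ref{cor:approx-basis-alg} with $\calA = \calF$ and the finite basis for $\calH'_{\calL}$, yielding the factor $4 = 2 \times 2$ and the bound $4\lambda\alpha + 4\epsilon$. The only difference is ordering (you run the algorithm first and translate back, the paper translates first), which is immaterial.
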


\begin{proof}
    The proof combines the helper lemmas we have proved in this section. 

    First, note that by Lemma~\ref{lem:many-losses-mc}, we are guaranteed that 
    \[\max_{\ell \in \calL} \max_{\lossPred \in \calF} \adv(\lossPred) \leq 2\MCE(\calC_{\calL}, p),\]
    where $\calC_{\calL}$ is defined as in Lemma~\ref{lem:many-losses-mc}. 

    By Lemma~\ref{lem:loss-to-prod-class-mc}, we can further guarantee that 
    \[\max_{\ell \in \calL} \max_{\lossPred \in \calF} \adv(\lossPred) \leq 4\MCE(\calC_{\calF \cup \calH_{\calL}, \calH'_{\calL}}, p) = 4\MCE(\calC_{\calF, \calH'_{\calL}}, p),\]
    where the last equality follows from our assumption that $\calF$ contains $\calH_{\calL}$.

    From here, we can now apply the result of Corollary~\ref{cor:approx-basis-alg}, which guarantees that given $m = O(n/\alpha^2)$ samples, we can output a predictor satisfying 
    $\MCE(\calC_{\calF, \calH'_{\calL}}, p) \leq \lambda\alpha + \epsilon$. 

    Substituting this bound into our upper bound, we conclude that we get a predictor satisfying 
    \[\max_{\ell \in \calL} \max_{\lossPred \in \calF} \adv(\lossPred) \leq 4\lambda\alpha + 4\epsilon.\]

    This completes the proof. 
\end{proof}

\subsection{Proof of Theorem~\ref{thm:1-lip-mc-pred}}\label{sec:1-lip-mc-pred-pf}

Theorem~\ref{thm:1-lip-mc-pred} instantiates the general result of Theorem~\ref{thm:general-approx-basis-mc} for a class $\calL$ that has a finite approximate basis: the class of all 1-Lipschitz proper losses. 

We appeal to a result proved by~\cite{OKK25}, which proves the existence of such a basis for this class:

\begin{lemma}[\cite{OKK25}, Lemma 5.4]\label{lem:1-lip-basis}
    Let $\epsilon > 0$, and let $\calL_{Lip}$ be the class of proper 1-Lipschitz loss functions $\ell: \{0, 1\} \times [0, 1] \rightarrow [0,1]$. The class $\{H'_{\ell}\}_{\ell \in \calL_{Lip}}$ admits an $\epsilon$-approximate basis of size $\lceil \frac{2}{\epsilon} + 1 \rceil$ and coefficient norm 4. 
\end{lemma}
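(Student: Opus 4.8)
The plan is to reduce Lemma~\ref{lem:1-lip-basis} to an elementary statement about bounded monotone Lipschitz functions on $[0,1]$, after one short observation: for a proper loss, $1$-Lipschitzness of $\ell$ upgrades $H'_\ell$ from merely non-increasing to genuinely $2$-Lipschitz. Concretely, recall from the Gneiting characterization that $H'_\ell(v) = \ell(1,v) - \ell(0,v)$, so $H'_\ell$ maps $[0,1]$ into $[-1,1]$ since $\ell$ takes values in $[0,1]$. For $0 \le v < v' \le 1$ I would write
\[ H'_\ell(v) - H'_\ell(v') \;=\; \big(\ell(1,v)-\ell(1,v')\big) \;+\; \big(\ell(0,v')-\ell(0,v)\big), \]
note that each bracketed quantity has absolute value at most $v'-v$ by $1$-Lipschitzness of $\ell$ in its prediction argument, and conclude $|H'_\ell(v) - H'_\ell(v')| \le 2(v'-v)$; monotonicity of $H'_\ell$ is immediate from concavity of $H_\ell$. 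Hence every $H'_\ell$ with $\ell \in \calL_{Lip}$ lies in the class $\calG^\ast$ of non-increasing $2$-Lipschitz functions $g:[0,1]\to[-1,1]$, and it suffices to exhibit a finite approximate basis for $\calG^\ast$.

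For the basis, assume $\epsilon \le 1$ (otherwise the zero function already $\epsilon$-approximates all of $\calG^\ast$), put $N = \lceil 2/\epsilon\rceil$ and $t_j = j/N$ for $j = 0,\dots,N$, so consecutive grid points are $1/N \le \epsilon/2$ apart, and take as candidate basis the $N$ threshold functions $\calG = \{\,v \mapsto \Ind[v \le t_j] : j = 1,\dots,N\,\}$, each mapping $[0,1]$ into $\{0,1\}\subseteq[-1,1]$; note $|\calG| = N \le \lceil 2/\epsilon + 1\rceil$. Given $g = H'_\ell \in \calG^\ast$, let $\hat g$ be the staircase equal to $g(t_k)$ on $(t_{k-1},t_k]$ (and to $g(t_1)$ at $0$). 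For $v \in (t_{k-1},t_k]$, the $2$-Lipschitz bound gives $|g(v) - \hat g(v)| = |g(v) - g(t_k)| \le 2(t_k - v) \le 2/N \le \epsilon$, the required uniform approximation.

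It then remains to read off the coefficients and bound their norm. Telescoping the staircase over the grid yields the exact identity
\[ \hat g(v) \;=\; g(t_N)\cdot 1 \;+\; \sum_{k=1}^{N-1}\big(g(t_k)-g(t_{k+1})\big)\,\Ind[v\le t_k], \]
which writes $\hat g$ in $\calG$ (the constant $1$ is $\Ind[v \le t_N]$) with coefficients $\alpha_N = g(t_N) = g(1) \in [-1,1]$ and $\alpha_k = g(t_k)-g(t_{k+1})$ for $k < N$. Each $\alpha_k$ is nonnegative by monotonicity and at most $2(t_{k+1}-t_k) = 2/N \le \epsilon \le 1$, so every coefficient lies in $[-1,1]$; and $\sum_{k<N}\alpha_k = g(t_1)-g(t_N)$ is at most the range of $g$, hence at most $2$, giving $\sum_j |\alpha_j| \le |g(1)| + 2 \le 3 \le 4$. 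This produces an $\epsilon$-approximate basis of size at most $\lceil 2/\epsilon+1\rceil$ and coefficient norm $4$.

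The one genuinely non-routine point is the first observation: that for a proper loss, $1$-Lipschitzness of $\ell$ in its prediction argument forces $H'_\ell$ to be $2$-Lipschitz, not merely non-increasing. This is exactly what lets a single function-independent uniform grid suffice — without it, a monotone $H'_\ell$ could concentrate all of its variation inside one grid cell, and no finite uniform basis would achieve uniform $\epsilon$-accuracy. The remainder is the standard staircase approximation of a bounded monotone Lipschitz function, organized through threshold functions so that the coefficient norm is governed by the total variation (at most $2$) rather than by the number of breakpoints.
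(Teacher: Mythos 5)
Your proposal is correct. Note that the paper itself gives no proof of this statement: it is imported verbatim as Lemma 5.4 of \cite{OKK25}, so there is no in-paper argument to compare against; what you have written is a self-contained derivation of the cited fact. Your key observation is the right one and is verified correctly: since $H'_\ell(v)=\ell(1,v)-\ell(0,v)$, $1$-Lipschitzness of $\ell(y,\cdot)$ for each label upgrades $H'_\ell$ from a bounded non-increasing function (where no fixed finite threshold basis could be uniformly accurate, as a jump could hide between grid points) to a $2$-Lipschitz one, after which the uniform grid with $N=\lceil 2/\epsilon\rceil$ thresholds, the staircase approximation, and the telescoping representation all check out: the identity $\hat g(v)=g(t_N)+\sum_{k=1}^{N-1}\bigl(g(t_k)-g(t_{k+1})\bigr)\Ind[v\le t_k]$ is exact, each coefficient lies in $[-1,1]$, and the coefficient norm is bounded by the range of $H'_\ell$ plus $|H'_\ell(1)|$, i.e.\ by $3$ (indeed, the Lipschitz bound alone already gives $\sum_{k<N}|\alpha_k|\le 2(N-1)/N<2$, so monotonicity is a convenience rather than a necessity here). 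Your constants are in fact slightly sharper than the cited ones (size $\lceil 2/\epsilon\rceil$ versus $\lceil 2/\epsilon+1\rceil$, norm $3$ versus $4$), which is consistent with the lemma as stated. The only point worth flagging is definitional: your argument assumes ``$1$-Lipschitz'' means $|\ell(y,v)-\ell(y,v')|\le|v-v'|$ for each $y\in\{0,1\}$, which is the standard reading and the one used in \cite{OKK25}, but it should be stated explicitly since the $2$-Lipschitz bound on $H'_\ell$ — and hence the whole construction — hinges on it.
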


with this Lemma in hand, we are ready to prove the theorem. 

\begin{proof}[Proof of Theorem~\ref{thm:1-lip-mc-pred}]
    We instantiate the basis from Lemma~\ref{lem:1-lip-basis}, and use this as the input to Theorem~\ref{thm:general-approx-basis-mc}.

    Because $d = \lceil 2/\epsilon + 1\rceil$, the bound on the weak-agnostic-learner's error probability becomes $\beta \leq \frac{\alpha^2\delta}{4\lceil 2/\epsilon + 1\rceil}$, and the resulting error guarantee gives us a predictor $p$ satisfying 

    \[\max_{\ell \in \calL_{Lip}} \max_{\lossPred \in \calF} \adv(\lossPred) \leq 16\alpha + 4\epsilon\]
    with probability at least $1 - \delta$. 
\end{proof}

\end{document}